\newtheorem{theorem}{\bf{Theorem}}
\newtheorem{proposition}[theorem]{\bf{Proposition}}
\newtheorem{remark}[theorem]{\bf{Remark}}
\title{\LARGE \bf Multiscale Abstraction, Planning and Control using Diffusion Wavelets \\for Stochastic Optimal Control Problems}
	\author{Jung-Su Ha and Han-Lim Choi
		\thanks{J.-S. Ha and H.-L. Choi are with the Dept. of Aerospace Engineering, KAIST,  Korea       {\tt\small \{wjdtn1404, hanlimc\}@kaist.ac.kr}}%
}
\begin{document}

\maketitle
\thispagestyle{empty}
\pagestyle{empty}

\begin{abstract}
This work presents a multiscale framework to solve a class of stochastic optimal control problems in the context of robot motion planning and control in a complex environment.
In order to handle complications resulting from a large decision space and complex environmental geometry, two key concepts are adopted: (a) a diffusion wavelet representation of the Markov chain for hierarchical abstraction of the state space; and (b) a desirability function-based representation of the Markov decision process (MDP) to efficiently calculate the optimal policy.
In the proposed framework, a global plan that compressively takes into account the long time/length-scale state transition is first obtained by approximately solving an MDP whose desirability function is represented by coarse scale bases in the hierarchical abstraction.
Then, a detailed local plan is computed by solving an MDP that considers wavelet bases associated with a focused region of the state space, guided by the global plan.
The resulting multiscale plan is utilized to finally compute a continuous-time optimal control policy within a receding horizon implementation.
Two numerical examples are presented to demonstrate the applicability and validity of the proposed approach.

\end{abstract}

\section{Introduction}
In this work, we address the continuous time/continuous state stochastic optimal control (SOC) problem for robots operating in a complex environment over a long time horizon.
The SOC problem involves how one computes an optimal policy for a system that is driven by uncertain disturbances, to maximize a certain performance index.
The standard and general way of obtaining an optimal control solution is to use the dynamic programming approach, which  computes the optimal cost-to-go function (also called the value function) for all possible states and times, and then reconstruct a control policy from the value function.
In a continuous-time/continuous-state problem, a nonlinear partial differential equation called the Hamilton-Jacobi-Bellman (HJB) equation needs to be solved, which is computationally intractable in most robotic control applications.

There is a class of SOC problem, called the \textit{linearly-solvable optimal control (LSOC)}, where the HJB equation can be naturally linearized, and efficient solution methods exist \cite{kappen2005path,todorov2009efficient}.
In the LSOC, the exponentiated value function, termed the \textit{desirability function}, is obtained as the principal eigenfunction of the linearized differential operator.
Because of the linear structure in the LSOC, several effective solution schemes have been proposed.
These include: (a) approaches which approximate the eigenstructure with some generic function approximation techniques such as Gaussian radial basis function (RBF) \cite{todorov2009eigenfunction}; or (b) which obtain the derivative of desirability functions by sampling many continuous trajectories and evaluating them.
The latter method is called path integral control \cite{kappen2005path}.
However, if the LSOC problem of interest is associated with  a very long time horizon and/or complex/high-dimensional geometric domains (e.g., induced by obstacles), the aforementioned techniques cannot be readily implemented, since obtaining an appropriate basis set for function approximation becomes non-trivial, and simulating trajectories while checking potential collisions becomes computationally expensive \cite{todorov2009eigenfunction,ha2016topology,williams2016aggressive}.
If the original continuous SOC problem is discretized, it can be represented as a Markov decision process (MDP) and solved using well-developed iterative methods, such as policy iteration and value iteration.
However, this discretization approach is inherently hobbled by the curse of dimensionality, which imposes a significant limit when handling high-dimensional problems.
There is a discrete-time equivalent of the LSOC, called the linearly-solvable MDP (LMDP), but the solution methodologies for this class of problem still exhibit limitations when handling high-dimensional and long-horizon problems.

To address a large-scale problem effectively, it may be fruitful to take note of certain human intelligence processess - in particular, the \textit{multiscale} and \textit{hierarchical} structure of human decision making.
Suppose that someone currently writing a paper at their office desk wants to get out of the building.
Let's assume that; this third floor office is in a building with one staircases and an elevator.
Then, what would this person's control policy look like?
Unlike the standard value function-based approach, this person would not try to figure out what they should do for all possible situations they might face; instead, they would determine which exit from the room they would use (if there is more than one), whether to take the elevator or the stairs, which building gate they would use, etc. A detailed plan such as ``which foot should be used to start walking down the stairs,'' would be determined later in the process of executing a segment of the overall plan, for example, ``go downstairs using the staircase.''
It should be noted that this human decision making process takes advantage of the underlying multiscale and hierarchical structure of state space.
In the above example, details such as the particular individual sequence of steps on stairs are abstracted to just a single notion of ``using the staircase".

There have been various studies whose goal has been to determine the hierarchical structure of spaces.
One such approach which has been extensively studied is multi-resolution analysis (MRA) \cite{goswami2011fundamentals}.
Based on the wavelet theory, MRA attempts to find a sequence of basis sets that spans a certain nested subspace.
For example, when applied to a function approximation problem, the basis functions in a coarse length scale will only reconstruct the rough trend of the target function.
Although this MRA method does provide a systematic scheme for hierarchical abstraction, constructing the wavelet bases over a high-dimensional complex geometric domain is not straightforward.
The notion of a diffusion wavelet  \cite{coifman2006diffusion} has been shown to provide a more general wavelet bases construction procedure.
This type of abstraction over a Markov chain has been utilized to expedite policy evaluation for MDPs \cite{maggioni2006fast}.
Also, diffusion wavelets are used to construct the basis set in representation policy iteration \cite{mahadevan2005value}, which simultaneously learns the policy and the system representation during the executed task.

\begin{figure}[pt]
	\centering
	\includegraphics*[width=1\columnwidth, viewport=40 95 920 530]{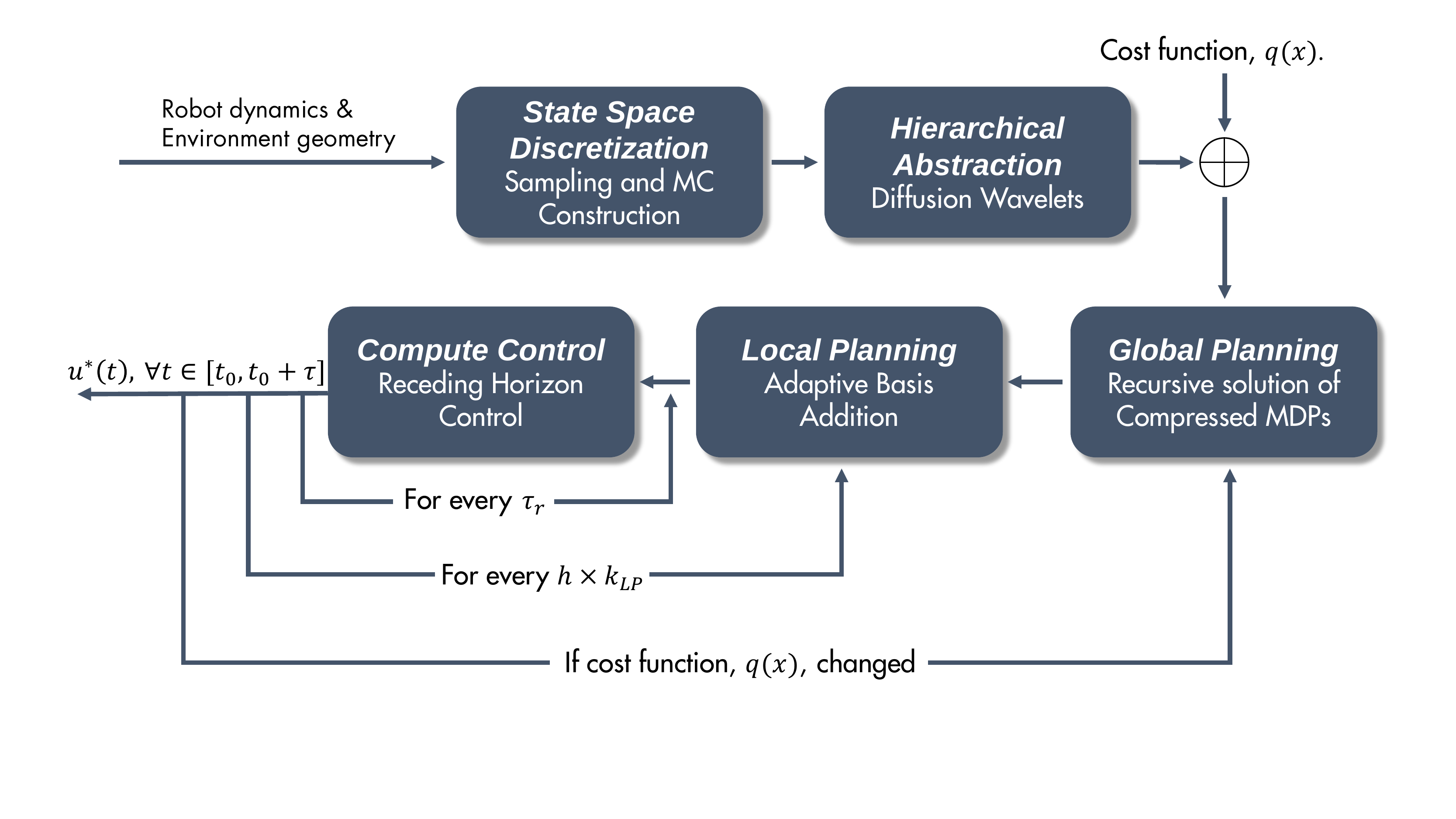}
	\caption{Proposed framework}
	\label{fig:framework}
	\vspace*{-.25in}
\end{figure}
This work addresses a large-scale, long time-horizon LSOC problem, taking advantage of an abstraction scheme using diffusion wavelet bases to approximate the desirability functions that naturally lead to a solution of the HJB equation. The key contribution of this paper is to present a systematic framework for solving LSOC, as depicted in Fig. \ref{fig:framework}. This is, to the authors' best knowledge, the first work that takes advantage of the multiscale structure of the basis set to solve an LSOC problem. The framework consists of five phases. (i)  In the discretization phase, the Markov chain associated with the robot dynamics is constructed by sampling a finite set of states in state-space. (ii) In the abstraction phase, the hierarchical bases structure is obtained using the diffusion wavelet method. (iii) In the global planning phase, an MDP constructed using only the coarse wavelet bases (or, on ``abstract-states'') is solved; this MDP is much more tractable than using the original basis set. Our work fundamentally differs from those of Mahadevan et al. and Maggioni et al. \cite{mahadevan2005value,maggioni2006fast} in the sense that the abstraction is obtained by discretizing the given stochastic system dynamics, and the multiscale structure of the basis sets is utilized to recursively solve compressed problems. (iv) In the local planning phase, focused regions where the robot will most likely visit in the near future are sought, and the detailed policy associated with these focused regions is computed. A certain search procedure is also applied to the coarse bases using optimal transition information which is naturally provided by the (approximate) solution of the LMDP. (v) In the control phase, a continuous control sequence is computed and applied to the robot in a receding horizon fashion. The remainder of the paper is primarily focused on elaborating the details of this framework, followed by numerical examples to validate the method.

\section{Continuous-time Stochastic Optimal Control Problem and Time Discretization}
Let $\mathbf{x}\in\mathcal{X}$ and $\mathbf{u}\in \mathcal{U}$ be the state and control vectors, respectively, where the state space, $\mathcal{X}$, and control input space, $\mathcal{U}$, are subsets of $\mathbb{R}^{d_x}$ and $\mathbb{R}^{d_u}$, respectively.
Suppose $\mathbf{w}$ is a $d_u$-dimensional Brownian motion process.
Consider the stochastic dynamics in which the deterministic drift term is affine in the control input:
\begin{equation}
d\mathbf{x} =  \mathbf{f}(\mathbf{x})dt+ G(\mathbf{x})(\mathbf{u}dt + \sigma d\mathbf{w}) \label{eq:conti_dyn}
\end{equation}
where $\mathbf{f}:\mathcal{X}\rightarrow \mathbb{R}^{d_x}$ is the passive dynamics and $G:\mathcal{X}\rightarrow \mathbb{R}^{d_x\times d_u}$ is the control transition matrix function.
Let a function $q: \mathcal{X} \rightarrow \mathbb{R}$ be an instantaneous state cost rate.
Then, the cost functional which we want to minimize is defined as:
\begin{equation}
J(\mathbf{x}) = \lim_{t_f\rightarrow\infty}\frac{1}{t_f}E\left[\int^{t_f}_0 q(\mathbf{x}(t))+\frac{1}{2\sigma^2}\mathbf{u}(t)^T\mathbf{u}(t)dt\right]. \label{eq:conti_cost}
\end{equation}
The problem with the cost function (\ref{eq:conti_cost}) and dynamics (\ref{eq:conti_dyn}) is called the infinite horizon average cost stochastic optimal control (SOC) problem.

The continuous-time problem is hard to solve except in some special cases.
In general, time-axis and state space are discretized to make the problem tractable.
Here, we introduce a time-axis discretized problem with a time step $h$.
The transition probability of one step without any control input is defined as:
\begin{equation}
\mathbf{x}[k+1] \sim p(\cdot|\mathbf{x}[k]), \label{eq:discrete_dyn_pas}
\end{equation}
which is called the passive dynamics.
If the control input is applied, the transition probability is changed and written as:
\begin{equation}
\mathbf{x}[k+1] \sim \pi(\cdot|\mathbf{x}[k]). \label{eq:discrete_dyn}
\end{equation}
The passive and controlled dynamics can be approximated as $\mathcal{N}(\mathbf{y}; \mu(h), \Sigma(h))$, where $\mathcal{N}$ is a Gaussian distribution with a mean $\mu(h)$ and covariance $\Sigma(h)$.
Also, for small $h$, the Kullback–-Leibler divergence between two distributions is approximated as $D_{KL}(\pi(\cdot|\mathbf{x})||p(\cdot|\mathbf{x})) = \frac{h}{2\sigma^2}\mathbf{u}'\mathbf{u}$.
Therefore, the cost functional (\ref{eq:conti_cost}) is written in discrete time setting:
\begin{align}
&J(\mathbf{x}) =\nonumber\\
&\lim_{K\rightarrow\infty}\frac{1}{K}E\left[\sum^{K}_{k=0} hq(\mathbf{x}[k])+D_{KL}\left(\pi(\cdot|\mathbf{x}[k])||p(\cdot|\mathbf{x}[k])\right)\right]. \label{eq:discrete_cost}
\end{align}
It is well known that the solution of the discrete-time SOC (\ref{eq:discrete_dyn_pas})-(\ref{eq:discrete_cost}) converges to the solution of the continuous-time SOC (\ref{eq:conti_dyn})-(\ref{eq:conti_cost}) as $h\rightarrow0$ \cite{todorov2009efficient}.

\section{State Space Discretization and Multi-Scale Abstraction}
\subsection{State Space Discretization and Associated Markov Chain}\label{sec:Graph}
In this subsection, the methods for state space discretization and the associated Markov chain construction will be discussed.
One general approach is to discretize the state space by grid:
this is very intuitive and simple, but it can easily suffer from the curse of dimensionality and hard to adapt the complex geometry of domains induced by obstacles.
Another approach to discretization is the sampling method.
The sampling method has several advantages over the grid method;
it is easy to control the number of discrete points (samples) and to utilize heuristics methods by adapting sampling density.
Also, collision checking modules are easily incorporated into the discretization scheme, which is the reason that the sampling-based algorithms are extensively studied and used in the motion planning literature.

Suppose a set of discrete states $X=\{\mathbf{x}_n\}$ is given.
Then the transition matrix for the passive dynamics $P$, where $P_{nm}$ means a transition probability from $\mathbf{x}_n$ to $\mathbf{x}_m$ should be determined such that they satisfy the following local consistency condition~\cite{kushner2013numerical,todorov2009efficient}:
\begin{align}
&\mu(h)\approx\bar{\mathbf{y}}_n=\sum_mP_{nm}\mathbf{x}_m, \\
&\Sigma(h)\approx\sum_mP_{nm}(\mathbf{x}_m-\bar{\mathbf{y}}_n)(\mathbf{x}_m-\bar{\mathbf{y}}_n)^T, \\
&\sum_mP_{nm}=1.
\end{align}
This requires solving the linear equation with the $(d_x^2+d_x+1)\times |X|$ matrix for each state.

Another approach for determining $P$ is approximation via Gaussian distribution.
Let our discrete sample distribution be given by $\Psi(\mathbf{x})$.\footnote{If the distribution is not known explicitly, a kernel density estimator~\cite{alpaydin1986introduction} can be utilized, e.g.,
	\begin{equation}
	\Psi(\mathbf{x}_m) = \frac{1}{|X|(h_K)^{d^x}}\sum_{m'}K\left(\frac{\mathbf{x}_m-\mathbf{x}_{m'}}{h_K}\right),\label{eq:KDE}
	\end{equation}
	with the Gaussian kernel,
	$
	K(\mathbf{d}) = \left(\frac{1}{\sqrt{2\pi}}\right)^{d_x}\exp\left(-\frac{||\mathbf{d}||^2_2}{2}\right).
	$}
In order to make the approximation satisfy the local consistency condition, we need to take this distribution into account.
Then, $\Psi(\mathbf{x})$ serves as the weight of the importance sampling.
The Markov chain is approximated as
\begin{equation}
P_{nm} = \frac{\mathcal{N}(\mathbf{x}_m:\mu(h), \Sigma(h))/\Psi(\mathbf{x}_{m})}{\sum_{m'}\mathcal{N}(\mathbf{x}_{m'}:\mu(h), \Sigma(h))/\Psi(\mathbf{x}_{m'})}. \label{eq:approx_Gaussian}
\end{equation}
That is, the transition probabilities to samples in a higher density region are adjusted lower so that the \textit{local properties} of the Markov chain~\eqref{eq:approx_Gaussian} are \textit{consistent} with those of the original (uncontrolled) SDE~\eqref{eq:conti_dyn}.
Note that for each state, this only requires computing the $|X|$ dimensional vector and normalizing it.
In general, $\mu(h)$ and $\Sigma(h)$ are approximated as $\mu(h)\approx\mathbf{x}_n+h\mathbf{f}(\mathbf{x}_n)$ and $\Sigma(h)\approx h\sigma^2G(\mathbf{x}_n)G(\mathbf{x}_n)^T$.
However, this might cause some problems when $\Sigma(h)$ is a singular matrix; $h\sigma^2G(\mathbf{x}_n)G(\mathbf{x}_n)^T$ is singular whenever $d_u<d_x$.
We can approximate $\mu(h)$ and $\Sigma(h)$ by integrating the moment dynamics of the linearized SDE for $t\in[0,h]$:
\begin{align}
&\dot{\mu}(t) = A\mu(t)+\mathbf{c},\label{eq:mean_dyn}\\
&\dot{\Sigma}(t) = A\Sigma(t)+\Sigma(t)A^T+BB^T, \label{eq:Cov_dyn}\\
& \mu(0)=\mathbf{x}_n,~\Sigma(0) = 0,
\end{align}
where $A = \left.\frac{d\mathbf{f}}{d\mathbf{x}}\right|_{\mathbf{x}=\mathbf{x}_n}$, $B=\sigma G(\mathbf{x}_n)$ and $\mathbf{c}=\mathbf{f}(\mathbf{x}_n)-A\mathbf{x}_n$.
Note that if a linear system $(A,B)$ is controllable, which is the case for many systems, the solution of the Lyapunov equation (\ref{eq:Cov_dyn}), $\Sigma(t)$, is nonsingular for all $t>0$.
The transition probability of this Markov chain converges to that of (\ref{eq:conti_dyn}) as $|X|\rightarrow\infty$ and $h\rightarrow0$ \cite{kushner2013numerical}.
Also, one can utilize a local Gaussian distribution which truncates the tails of the distribution to make $P$ sparse.

\subsection{Multi-Scale Abstraction on Graph: Diffusion Wavelets}
From now on, we consider $T=P^T$ for notation simplicity.
Then $T_{nm}$ represents a transition probability from $\mathbf{x}_m$ to $\mathbf{x}_n$.
The Markov chain, $T$, obtained by discretizing the diffusion process ((\ref{eq:conti_dyn}) with $\mathbf{u}=0$) is known to have some interesting properties, namely: \textit{local}, \textit{smoothing} and \textit{contractive} \cite{coifman2006diffusion}.
From any initial point, $\delta_m$, the agent (numerically) transitions to only a few of its neighbors (i.e., $T\delta_m$ has a small support) and $T^j\delta_m$ is a smooth probability distribution.
Also since $||T||_2\leq1$, a dimension of subspace, $V_j$, which is $\epsilon$-spanned by $\{T^j\delta_m\}_{m\in X}$\footnote{With a slight abuse of notation, we will use $m\in X$ to denote $\mathbf{x}_m\in X$.} monotonically decreases as $j$ increases and $V_0\supseteq V_1\supseteq\cdots\supseteq V_j\supseteq\cdots$; especially for an irreducible Markov chain, $\text{dim}(V_j)\rightarrow 1$ as $j$ increases and a limit of $V_j$ corresponds to the stationary distribution of the Markov chain.

Let $W_j$ be an orthogonal complement of $V_{j+1}$ into $V_j$, i.e., $V_j = V_{j+1}\oplus W_j$ and suppose the bases $\Phi_j$ and $\Psi_j$ span $V_j$ and $W_j$, respectively.
By using the aforementioned properties of $T$, \textit{diffusion wavelets} constructs a hierarchical structure of a set of well-localized bases $\Phi_j$ and $\Psi_j$ called \textit{scaling} and \textit{wavelet functions}, respectively.
In order to explain the procedure, we introduce some notations used in \cite{coifman2006diffusion} with slight changes.
Let $[L]^{\Phi_{j+1}}_{\Phi_j}$ be the matrix representing the operator $L$ w.r.t. the basis $\Phi_j$ in the domain and $\Phi_{j+1}$ in the range, and $[B]_{\Phi_j}$ be a set of vectors $B$ represented on a basis $\Phi_j$, where the columns of $[B]_{\Phi_j}$ are the coordinates of the vectors $B$ in the coordinates $\Phi_j$.

The diffusion wavelet tree starts being constructed with a fixed precision $\epsilon>0$, the basis $\Phi_0=\{\delta_m\}_{m\in X}$ and the operator $T_0:=[T]^{\Phi_0}_{\Phi_0}=T$ on the basis $\Phi_0$.
Consider the set of functions $\tilde{\Phi}_0=\{T\delta_m\}_{m\in X}$, which is a set of columns of $[T]^{\Phi_0}_{\Phi_0}$.
Because $T$ is local, these functions are well-localized.
The algorithm gets a basis $\Phi_1=\{\phi_{1,m}\}_{m\in X_1}$ ($X_1$ is defined as this index set) written on the basis $\Phi_0$ by carefully orthogonalizing $\tilde{\Phi}_0$ to preserve being well-localized and to span a subspace which is $\epsilon$-close to the span$(\tilde{\Phi}_0)$.
Note that the elements of $\Phi_1$ are coarser than the elements of $\Phi_0$ since $T$ is smoothing and $|X_1|\leq|X|$ since $T$ is contractive.
The algorithm stores the information of the new basis in $[\Phi_1]_{\Phi_0}$ and computes the \textit{compressed} operator $T_1=[T^2]^{\Phi_1}_{\Phi_1}=[\Phi_{1}]_{\Phi_0}^T[T^2]^{\Phi_0}_{\Phi_0}[\Phi_{1}]_{\Phi_0}$ w.r.t. $\Phi_1$ in the domain and the range.
Also, the wavelets $[\Psi_0]_{\Phi_0}$ are similarly obtained by looking at the columns of $I_{\langle\Phi_0\rangle}-[\Phi_1]_{\Phi_0}[\Phi_1]_{\Phi_0}^T$.
The algorithm proceeds in the same fashion to get $[\Phi_{j+1}]_{\Phi_j}$ and $T_{j+1}$ for $j=1,...,J-1$.
A pseudo-code of the algorithm is shown in Algorithm \ref{alg:DWT}.

\begin{algorithm}
	\caption{Pseudo-code for Diffusion Wavelet Tree}\label{alg:DWT}
	\begin{algorithmic}[1]
		\For {$j=0,1,...,J-1$}
		\State $[\Phi_{j+1}]_{\Phi_j}\gets \textsc{SparseQR}(T_j,\epsilon)$
		\State $T_{j+1} = [T^{2^{j+1}}]^{\Phi_{j+1}}_{\Phi_{j+1}}\gets[\Phi_{j+1}]_{\Phi_j}^TT_j^2[\Phi_{j+1}]_{\Phi_j}$
		\State $[\Psi_j]_{\Phi_j}\gets \textsc{SparseQR}(I_{\langle\Phi_j\rangle}-[\Phi_{j+1}]_{\Phi_j}[\Phi_{j+1}]_{\Phi_j}^T,\epsilon)$
		\EndFor
		\State \Return $\textsc{DWT}\gets\{[\Phi_{j+1}]_{\Phi_j}, [\Psi_j]_{\Phi_j}; \forall j=0,...,J-1\}$
	\end{algorithmic}
\end{algorithm}

A set of basis functions at level $j$ can be written in the original coordinate (or can be \textit{unpacked}) as:
\begin{align}
\Phi_j&=[\Phi_j]_{\Phi_0}\nonumber\\
&=[\Phi_{j-1}]_{\Phi_0}[\Phi_{j}]_{\Phi_{j-1}}\nonumber\\
&=[\Phi_1]_{\Phi_0}\cdots[\Phi_{j-1}]_{\Phi_{j-2}}[\Phi_{j}]_{\Phi_{j-1}},
\end{align}
which is represented as a $|X|\times|X_j|$ matrix.
Note that each column of $[\Phi_j]_{\Phi_0}$ can be viewed as an ``abstract-state" of the original Markov chain.
The subspace with basis $[\Phi_j]_{\Phi_0}$ is $j\epsilon$-close to the subspace spanned by $\{T^{1+2+2^2+\cdots+2^{j-1}}\delta_m=T^{2^j-1}\delta_m\}_{m\in X}$;
that is, at the scale $j$, where $2^j$ steps of the scale $0$ (original scale) are considered as one-step, there are only $|X_j|$ meaningful combinations of states and each combination, $[\Phi_j]_{\Phi_0}$, represents ``abstract-state''.

\section{Multiscale Global and Local Planning}
\subsection{Linearly-solvable MDP and linear Bellman equation}
With a set of discrete states, $X$, the state-space as well as time-axis discretized version of SOC is formulated as the Markov decision process (MDP).
Equations representing the problem have the same form as (\ref{eq:discrete_dyn_pas})-(\ref{eq:discrete_cost}).
Because the cost is average over an infinite horizon, the optimal cost-to-go value,
\begin{equation}
c := \min_\pi J^\pi(\mathbf{x}),
\end{equation}
does not depend on the initial state, which is problematic since the optimal policy is reconstructed from the ``difference'' of the cost-to-go between states.
Consider the optimal cost-to-go function for the finite horizon MDP:
\begin{equation}
v_K(\mathbf{x}) := \min_\pi E\left[\sum^{K}_{k=0} hq(\mathbf{x})+D_{KL}(\pi(\cdot|\mathbf{x})||p(\cdot|\mathbf{x}))\right].
\end{equation}
Using $v_K$ for $K\rightarrow\infty$, the differential cost-to-go function is defined as:
\begin{equation}
v(\mathbf{x}) := v_k(\mathbf{x})-Kc.
\end{equation}
Then $c$ and $v$ satisfies the Bellman equation:
\begin{align}
&hc+v(\mathbf{x}) \nonumber \\
&= \min_\pi(hq(\mathbf{x})+D_{KL}(\pi(\cdot|\mathbf{x})||p(\cdot|\mathbf{x}))+E_{\mathbf{x}'\sim\pi(\cdot|\mathbf{x})}[v(\mathbf{x}')]). \label{eq:Bellman}
\end{align}
By defining the (differential) desirability function, $$z(\mathbf{x}) = \exp(-v(\mathbf{x})),$$ and the linear operator $\mathcal{G}[z](\mathbf{x})=\sum_{\mathbf{x}'}p(\mathbf{x}'|\mathbf{x})z(\mathbf{x}')$, we can rewrite the right side of the Bellman equation as: {\small
\begin{align}
&\min_\pi\left(hq(\mathbf{x})+E_{\mathbf{x}'\sim\pi(\cdot|\mathbf{x})}\left[\log\left(\frac{\pi(\mathbf{x}'|\mathbf{x})}{p(\mathbf{x}'|\mathbf{x})\exp(-v(\mathbf{x}'))}\right)\right]\right)= \nonumber\\
&\min_\pi\left(hq(\mathbf{x})-\log\mathcal{G}[z](\mathbf{x})+D_{KL}\left(\pi(\mathbf{x}'|\mathbf{x})||\frac{p(\mathbf{x}'|\mathbf{x})z(\mathbf{x}')}{\mathcal{G}[z](\mathbf{x})}\right)\right). \label{eq:Bellman_proc}
\end{align}}
Note that $\pi$ only affects the $D_{KL}$ term.
Then, the optimal policy is obtained analytically:
\begin{equation}
\pi^*(\mathbf{x}'|\mathbf{x}) = \frac{p(\mathbf{x}'|\mathbf{x})z(\mathbf{x}')}{\mathcal{G}[z](\mathbf{x})}. \label{eq:opt_pi}
\end{equation}
Substituting \eqref{eq:opt_pi} into \eqref{eq:Bellman_proc} yields the linear Bellman equation as:
\begin{equation}
\exp(-hc)z(\mathbf{x})=\exp(-hq(\mathbf{x}))\mathcal{G}[z](\mathbf{x}). \label{eq:lin_Bellman}
\end{equation}
Let $\mathbf{z}$ be the $|X|$-dimensional column vector of associated desirability function.
Then the linear Bellman equation (\ref{eq:lin_Bellman}) can be expressed in a matrix form
\begin{equation}
\lambda\mathbf{z} = QP\mathbf{z}, \label{eq:lin_Bellman2}
\end{equation}
where $\lambda = \exp(-hc)$ is the largest eigenvalue and $Q = \text{diag}(\exp(-hq(\mathbf{x})))$ \cite{todorov2009efficient}.
Note that (\ref{eq:lin_Bellman2}) is an eigenvalue problem with a $|X|\times |X|$ matrix.
There are various methods to solve an eigenvalue problem; one general way is the power iteration method.
An interesting point is that the power iteration method for this problem is an exact counterpart of the value iteration for standard MDPs but the convergence of the power iteration is faster \cite{todorov2009eigenfunction}.
Despite this advantage, however, the problem becomes intractable as the size of $X$ increases, and the efficient solution method is essential.

\subsection{(Global) Planning with Compressed MDPs}
Rather than solving the original $|X|\times |X|$ eigenvalue problem, we can treat the lower-dimensional coarsened problem.
Suppose a set of ``abstract-state" at level $j$, $\Phi_j$, is utilized as a set of bases for the original problem, which means the problem is viewed in a lower resolution with $2^j$ time scale.
Then, $\mathbf{z}$ is approximated as a linear combination of this set:
\begin{equation}
\hat{\mathbf{z}}_j=\Phi_j\mathbf{w_j},
\end{equation}
and the original problem (\ref{eq:lin_Bellman2}) is also written in these bases:
\begin{equation}
\hat{\lambda}_j\Phi_j\mathbf{w_j} = QP\Phi_j\mathbf{w_j}. \label{eq:approx_eigen}
\end{equation}
Since the columns of $\Phi_j$ are orthogonal, multiplying both sides of \eqref{eq:approx_eigen} by $\Phi_j'$ yields a compressed problem:
\begin{equation}
\hat{\lambda}_j\mathbf{w_j} = M_j\mathbf{w_j}, \label{eq:lin_Bellman3}
\end{equation}
where $M_j:=\left[QP\right]^{\Phi_j}_{\Phi_j}=\Phi_j^TQP\Phi_j$ is the $|X_j|\times|X_j|$ matrix with $M_0 = QP$.
Note that the compressed problem (\ref{eq:lin_Bellman3}) is much more tractable than the original problem (\ref{eq:lin_Bellman2}) if $|X_j|<<|X|$.

\begin{algorithm}
	\caption{Pseudo-code for Global Planning}\label{alg:GP}
	\begin{algorithmic}[1]
		\Statex // Recursive compression of the operator
		\State $M_l \gets \Phi_{l}^TQP\Phi_{l}$
		\For {$j=l,...,J-1$} \Comment from fine to coarse
		\State $M_{j+1}\gets[\Phi_{j+1}]_{\Phi_j}^TM_j[\Phi_{j+1}]_{\Phi_j}$
		\EndFor
		\Statex // Recursive solution of MDP
		\State $\mathbf{w}_J\gets\textsc{PowerIteration}(M_J, \mathbf{1})$
		\For {$j=J-1,...,l$} \Comment from coarse to fine
		\State $\tilde{\mathbf{w}}_j\gets [\Phi_{j+1}]_{\Phi_j}w_{j+1}$
		\State $\mathbf{w}_j\gets\textsc{PowerIteration}(M_j, \tilde{\mathbf{w}}_j)$
		\EndFor
		\State \Return $\hat{\mathbf{z}}_l \gets \Phi_l\mathbf{w}_l$
	\end{algorithmic}
\end{algorithm}
The hierarchical structure of the diffusion wavelet tree can be utilized to solve the problem more efficiently.
Suppose that our goal is to obtain the $l$th level approximate solution.  
First, the matrix $M_{j+1}$ can be computed recursively as $M_{j+1}\gets[\Phi_{j+1}]_{\Phi_j}^TM_j[\Phi_{j+1}]_{\Phi_j}$ from the $l$th level.
Also, if an iterative method like a power iteration method is used to solve the eigenvalue problem, the solution of the $(j+1)$th level, $\mathbf{w}_{j+1}$, can provide a \textit{warm-start} point to the $j$th level problem;
that is, the iteration starts from $\tilde{\mathbf{w}}_j= [\Phi_{j+1}]_{\Phi_j}\mathbf{w}_{j+1}$ by unpacking the solution of the next level.
If $\hat{\mathbf{z}}_j$ is not sharply changed by scale $j$, this warm-start will significantly reduce the number of iterations.
In summary, in order to solve the global planning at scale $l$, the matrix $M_j$ is computed from the finer to the coarsest level (from $l$ to $J$) and the reduced problem is solved from the coarsest to the finer level (from $J$ to $l$), recursively.
The pseudo-code of this procedure is shown in Algorithm \ref{alg:GP}.

\subsection{(Local) Supplementary/Detailed Planning}
In the global planning phase, the solution of the original MDP is approximated using the bases at scale $l$.
The exact solution can be reconstructed by adding the wavelet functions in lower scales, $\Psi_{0:l-1}$, as supplementary bases since $V_0=V_l\oplus W_{l-1}\oplus W_{l-2}\oplus\cdots\oplus W_0$.
(Note that $\Phi_l$ and $\Psi_{0:l-1}$ contain $|X_l|$ and $|X|-|X_l|$ bases, respectively.)
It is obvious that when a larger number of bases is used, we will have a more exact solution, but the problem becomes intractable.
An appropriate subset of bases needs to be selected from $\Psi_{1:l-1}$.
Which bases in $\Psi_{0:l-1}$ are valuable to improve the quality of the solution?
The wavelet bases are built as being well-localized. Therefore, the above question can be restated as follows:
``Which regions of the domain should be treated more extensively?"

\begin{algorithm}
	\caption{Pseudo-code for Local Planning}\label{alg:LP}
	\begin{algorithmic}[1]
		\State $P^* \gets \text{diag}(P\hat{\mathbf{z}}_l)^{-1}P\text{diag}(\hat{\mathbf{z}}_l)$ 
		\State $\mathbf{p} \gets \left[\frac{\mathcal{N}(\mathbf{x}_m:\mu, \Sigma)}{\sum_{m'}\mathcal{N}(\mathbf{x}_{m'}:\mu, \Sigma)}\right]$ \Comment $\mathbf{p}$: $|X|$-dim row vector
		\State $\bar{\mathbf{p}}\gets \mathbf{p}[\Phi_l]_{\Phi_0}$ \Comment $\bar{\mathbf{p}}$: $|X_l|$-dim row vector
		\State $\bar{P}^* \gets [\Phi_l]_{\Phi_0}^T(P^*)^{2^l}[\Phi_l]_{\Phi_0}$ 
		\State $\bar{\mathbf{d}} \gets \bar{\mathbf{p}}/(k_{LP}/2^l)$ 
		\For {$t=2,...,(k_{LP}/2^l)$}
		\State $\bar{\mathbf{p}} \gets \bar{\mathbf{p}}\bar{P}^*$
		\State $\bar{\mathbf{d}} \gets \bar{\mathbf{d}}+\bar{\mathbf{p}}/(k_{LP}/2^l)$
		\EndFor
		\State $\mathbf{d} \gets \bar{\mathbf{d}}[\Phi_l]_{\Phi_0}^T$ \Comment $\mathbf{d}$: $|X|$-dim row vector
		\State $\mathbf{s} \gets \mathbf{d}\left|[\Psi_{0:(l-1)}]_{\Phi_0}\right|$ \Comment $\mathbf{s}$: $(|X|-|X_l|)$-dim row vector
		\State Choose subset of wavelet bases, $\Psi_{LP}$, from $\Psi_{0:(l-1)}$ based on the score, $\mathbf{s}$.
		\State $\mathbf{w}\gets\textsc{PowerIter}(\begin{bmatrix} M_l & \Phi_l^TM_0\Psi_{LP}\\ \Psi_{LP}^TM_0\Phi_l & \Psi_{LP}^TM_0\Psi_{LP} \end{bmatrix}, \begin{bmatrix} \mathbf{w}_l \\ \mathbf{0} \end{bmatrix})$
		\State \Return $\hat{\mathbf{z}}\gets\begin{bmatrix}\Phi_l & \Psi_{LP}	\end{bmatrix}\mathbf{w}$
	\end{algorithmic}
\end{algorithm}

In this work, we choose and utilize the supplementary bases which are localized in the region the agent is \textit{more likely} to visit during the $k_{LP}$-steps with the approximated optimal policy.
And, the supplementary bases will be discarded and adaptively re-chosen for every $k_{LP}$-steps.
Define a weighting of states as the occupancy measure:
\begin{equation}
d^{\pi^*}(\mathbf{x}) = \frac{1}{k_{LP}}\sum^{k_{LP}}_{k=1} Pr(\mathbf{x}[k]=\mathbf{x}|\mathbf{x}[0]=\mathbf{x}_{cur},\pi^*),~\forall \mathbf{x}\in X
\end{equation}
where $\mathbf{x}_{cur}\in\mathcal{X}$ is the current state of a robot. Of course, $\mathbf{x}_{cur}$ is not one of the discrete points, i.e., $\mathbf{x}_{cur}\notin\mathcal{X}$.
$Pr(\mathbf{x}[1]=\mathbf{x}|\mathbf{x}[0]=\mathbf{x}_{cur},\pi^*)$ is obtained as follows.
First, the passive transition probability, $p(\mathbf{y}|\mathbf{x}_{cur}),~\forall\mathbf{y}\in X$, is computed using the Gaussian approximation as in (\ref{eq:approx_Gaussian}).
The optimal policy (\ref{eq:opt_pi}) states that the transition probability to $\mathbf{x}'$ is proportional to the desirability of that state as well as the transition probability without any control.
In a similar way, we obtain the (approximate) transition probability as:
\begin{equation}
Pr(\mathbf{x}[1]=\mathbf{x}|\mathbf{x}[0]=\mathbf{x}_{cur},\pi^*)=\frac{p(\mathbf{x}|\mathbf{x}_{cur})z(\mathbf{x})}{\sum_{\mathbf{x}\in X} p(\mathbf{x}|\mathbf{x}_{cur})z(\mathbf{x})}. \label{eq:opt_tran1}
\end{equation}
The probability for the other $k$ can be simply computed using the optimal transition probability matrix $P^*=\text{diag}(P\mathbf{z})^{-1}P\text{diag}(\mathbf{z})$ as:
\begin{align}
&Pr(\mathbf{x}[k]=\mathbf{x}_m|\mathbf{x}[0]=\mathbf{x}_{cur},\pi^*) \nonumber\\
&=\sum_{n'}Pr(\mathbf{x}[k-1]=\mathbf{x}_{n'}|\mathbf{x}[0]=\mathbf{x}_{cur},\pi^*)P^*_{n'm}. \label{eq:opt_tran2}
\end{align}
Then, all wavelet functions are scored by how overlapped they are with $d^{\pi^*}(\mathbf{x})$ as:
\begin{equation}
\mathbf{s}=\mathbf{d}\left|[\Psi_{0:(l-1)}]_{\Phi_0}\right|,
\end{equation}
where $\mathbf{d}$ and $\mathbf{s}$ are $|X|$ and $(|X|-|X_l|)$-dimensional row vectors, whose components each correspond to the weighting of each state and the score of each basis, respectively.
Finally, the subset of wavelet bases, $\Psi_{LP}$, is selected from $\Psi_{0:(l-1)}$ based on the score $\mathbf{s}$, and $\mathbf{z}$ is then re-approximated using the new basis set.

The procedure computing $d^{\pi^*}(\mathbf{x})$ operating in the original discrete state space requires $|X|$-dimensional matrix-vector operations.
However, the policy obtained from $\hat{\mathbf{z}}_l$ is only valid between ``abstract-states'', $\{\phi_{l,k}\}_{k\in X_l}$, with the time scale $2^l$.
Using this insight, the procedure can be done in the compressed form with $\Phi_l$:
we compress the operator as $\bar{P}^*=[\Phi_l]_{\Phi_0}'(P^*)^{2^l}[\Phi_l]_{\Phi_0}$, compute $d^{\pi^*}(\mathbf{x})$ in the compressed domain and unpack the result (as shown in line 3 - 10 in Algorithm \ref{alg:LP}).
As a result, $|X|$-dimensional operations are reduced to $|X_l|$-dimensional operations.

\section{Receding Horizon Control in Continuous Time}
Using the desirability function on the set of sample states, the continuous-time optimal control sequence, $\mathbf{u}^*(t)$, needs to be computed.
We will compute the continuous optimal control sequence for the interval $\tau=h\times k_{RHC}$, but apply it for the smaller interval $\tau_r\leq\tau$ in a receding horizon control fashion.

The control can be computed by matching the 1st order moment of the original SDE (\ref{eq:conti_dyn}) and the optimal policy (\ref{eq:opt_pi}) for the MDP.
First, the optimal transition probability for $k_{RHC}$-steps, $Pr(\mathbf{x}[k_{RHC}]=\mathbf{y}|\mathbf{x}[0]=\mathbf{x}_{cur},\pi^*)$, is computed in the same way as \eqref{eq:opt_tran1}-\eqref{eq:opt_tran2}.
If the robot follows the optimal policy (\ref{eq:opt_pi}), the expected state after $\tau$ will be
\begin{equation}
\mathbf{y}_{new} = \sum_{\mathbf{y}\in X}\mathbf{y}Pr(\mathbf{x}[k_{RHC}]=\mathbf{y}|\mathbf{x}[0]=\mathbf{x}_{cur},\pi^*).
\end{equation}
Then, the control that matches the state mean of the robot after $\tau$ with $\mathbf{y}_{new}$ is computed as the following proposition.
\begin{proposition}
	Consider the linearized 1st order moment dynamics of (\ref{eq:conti_dyn}),
	\begin{equation}
	\dot{\mu}_c(t)=A\mu_c(t)+G\mathbf{u}(t)+\mathbf{c}, \label{eq:con_mean_dyn}
	\end{equation}
	where $A = \left.\frac{d\mathbf{f}}{d\mathbf{x}}\right|_{\mathbf{x}=\mathbf{x}_{cur}},~G=G(\mathbf{x}_{cur})$ and $\mathbf{c}=\mathbf{f}(\mathbf{x}_{cur})-A\mathbf{x}_{cur}$.
    Suppose that the initial state is given by $\mu_c(0)=\mathbf{x}_{cur}$ and the control sequence,
	\begin{equation}
	\mathbf{u}^*(t)=-\sigma^2G^Te^{A^T(\tau-t)}\Sigma(\tau)^{-1}(\mu(\tau)-\mathbf{y}_{new}), \label{eq:opt_con}
	\end{equation}
	is applied to (\ref{eq:con_mean_dyn}) for $t\in[0,\tau]$, where $\mu(\tau)$ and $\Sigma(\tau)$ are the solutions of (\ref{eq:mean_dyn}) and (\ref{eq:Cov_dyn}) with $\mu(0)=\mathbf{x}_{cur},~\Sigma(0)=0$, respectively.
	Then, $\mu_c(\tau)=\mathbf{y}_{new}$.
\end{proposition}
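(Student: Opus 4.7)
My plan is to reduce the proposition to a direct application of the variation-of-constants formula for a linear ODE, combined with the explicit Gramian representation of the solution of the Lyapunov equation (\ref{eq:Cov_dyn}). The key observation that makes everything work is that the time-varying gain in $\mathbf{u}^*(t)$ is tailored precisely so that the weighted integral of $G\mathbf{u}^*$ collapses to $\Sigma(\tau)$, after which the $\Sigma(\tau)^{-1}$ cancels and the residual $\mu(\tau)-\mathbf{y}_{new}$ is exactly subtracted off.

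First, since (\ref{eq:con_mean_dyn}) is a linear time-invariant ODE with affine forcing $G\mathbf{u}(t)+\mathbf{c}$, I would write its solution at time $\tau$ by variation of constants:
\begin{equation*}
\mu_c(\tau) = e^{A\tau}\mathbf{x}_{cur} + \int_0^\tau e^{A(\tau-t)}\mathbf{c}\,dt + \int_0^\tau e^{A(\tau-t)}G\mathbf{u}^*(t)\,dt.
\end{equation*}
Setting $\mathbf{u}\equiv 0$ in (\ref{eq:con_mean_dyn}) recovers the passive mean dynamics (\ref{eq:mean_dyn}) with $\mu(0)=\mathbf{x}_{cur}$, so the first two terms above equal $\mu(\tau)$. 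Subtracting,
\begin{equation*}
\mu_c(\tau)-\mu(\tau) = \int_0^\tau e^{A(\tau-t)}G\mathbf{u}^*(t)\,dt.
\end{equation*}

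Second, I would substitute (\ref{eq:opt_con}) into this integral. The time-independent vector $\Sigma(\tau)^{-1}(\mu(\tau)-\mathbf{y}_{new})$ pulls out on the right, leaving
\begin{equation*}
\mu_c(\tau)-\mu(\tau) = -\left(\sigma^2\int_0^\tau e^{A(\tau-t)}GG^T e^{A^T(\tau-t)}\,dt\right)\Sigma(\tau)^{-1}\bigl(\mu(\tau)-\mathbf{y}_{new}\bigr).
\end{equation*}
At this point I would invoke the standard closed-form solution of the Lyapunov ODE (\ref{eq:Cov_dyn}) with $B=\sigma G$ and $\Sigma(0)=0$, namely the finite-horizon controllability Gramian $\Sigma(\tau)=\int_0^\tau e^{A(\tau-t)}BB^T e^{A^T(\tau-t)}\,dt$, which is exactly the parenthesized matrix above. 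It cancels $\Sigma(\tau)^{-1}$, yielding $\mu_c(\tau)-\mu(\tau) = -(\mu(\tau)-\mathbf{y}_{new})$, i.e., $\mu_c(\tau)=\mathbf{y}_{new}$.

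No step is genuinely deep here; once the Gramian identification is made everything is routine linear algebra. The only point requiring mild care is the invertibility of $\Sigma(\tau)$, which is tacitly assumed in (\ref{eq:opt_con}) and is guaranteed for all $\tau>0$ under the controllability hypothesis on $(A,B)$ already highlighted immediately after (\ref{eq:Cov_dyn}). I would also note in passing that the argument uses only the first moment of the SDE, and that the diffusion magnitude $\sigma^2GG^T$ enters purely through the Gramian — this is the matching between the entropic KL penalty of the LSOC and the quadratic control effort which makes such mean-steering possible in closed form.
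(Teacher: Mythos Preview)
Your proposal is correct and follows essentially the same route as the paper: both write the variation-of-constants formula for (\ref{eq:con_mean_dyn}), identify the uncontrolled part with $\mu(\tau)$, substitute (\ref{eq:opt_con}), recognize the resulting integral as the Gramian $\Sigma(\tau)=\sigma^2\int_0^\tau e^{A(\tau-t)}GG^Te^{A^T(\tau-t)}dt$, and cancel. Your added remarks on invertibility of $\Sigma(\tau)$ via controllability are accurate and consistent with the paper's earlier discussion after (\ref{eq:Cov_dyn}).
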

\begin{proof}
	The solution of (\ref{eq:con_mean_dyn}) is given by
	\begin{equation}
	\mu_c(\tau) = e^{A\tau}\mu_c(0)+\int^{\tau}_0e^{A(\tau-t)}(G\mathbf{u}+\mathbf{c})dt. \label{eq:mean_sol}
	\end{equation}
	Substituting (\ref{eq:opt_con}) into (\ref{eq:mean_sol}) yields
	\begin{align}
	\mu_c(\tau) &= e^{A\tau}\mu_c(0)+\int^{\tau}_0e^{A(\tau-t)}\mathbf{c}dt \nonumber\\
	&-\sigma^2\int^{\tau}_0e^{A(\tau-t)}GG^Te^{A^T(\tau-t)}dt\Sigma(\tau)^{-1}(\mu(\tau)-\mathbf{y}_{new}) \nonumber\\
	&=\mu(\tau)-\Sigma(\tau)\Sigma(\tau)^{-1}(\mu(\tau)-\mathbf{y}_{new}) \nonumber\\
	&=\mathbf{y}_{new}\label{eq:mean_sol2}
	\end{align}
	We utilize the analytic solutions of (\ref{eq:mean_dyn}) and (\ref{eq:Cov_dyn}):
	\begin{align}
	&\mu(\tau)= e^{A\tau}\mu(0)+\int^{\tau}_0e^{A(\tau-t)}\mathbf{c}dt, \nonumber\\
	&\Sigma(\tau)=\int^{\tau}_0e^{A(\tau-t)}BB^Te^{A^T(\tau-t)}dt,
	\end{align}
	and $BB^T=\sigma^2GG^T$.
\end{proof}
\begin{remark}
It is also worth noting that $\mathbf{u}^*(t)$ is the solution of a deterministic optimal control problem of a dynamic system (\ref{eq:con_mean_dyn}) with cost function
$$
J = \int_0^{\tau}q(\mathbf{x}_{cur})+\frac{1}{2\sigma^2}\mathbf{u}(t)^T\mathbf{u}(t)dt,
$$
subject to $\mu_c(0)=\mathbf{x}_{cur}$ and $\mu_c(\tau)=\mathbf{y}_{new}$, which is also called an affine-quadratic optimal control problem.
\end{remark}

Finally, we'd like to emphasize that the time interval $\tau$ should be determined to be small as the linearization effect on the mean dynamics (\ref{eq:con_mean_dyn}) is not too significant.

\section{Numerical Examples}
\subsection{Robot Navigation in Fractal Shape Environment}
\begin{figure}[tp]
	\centering
	\subfigure[]{
		\includegraphics*[width=.4\columnwidth, viewport =52 30 380 300]{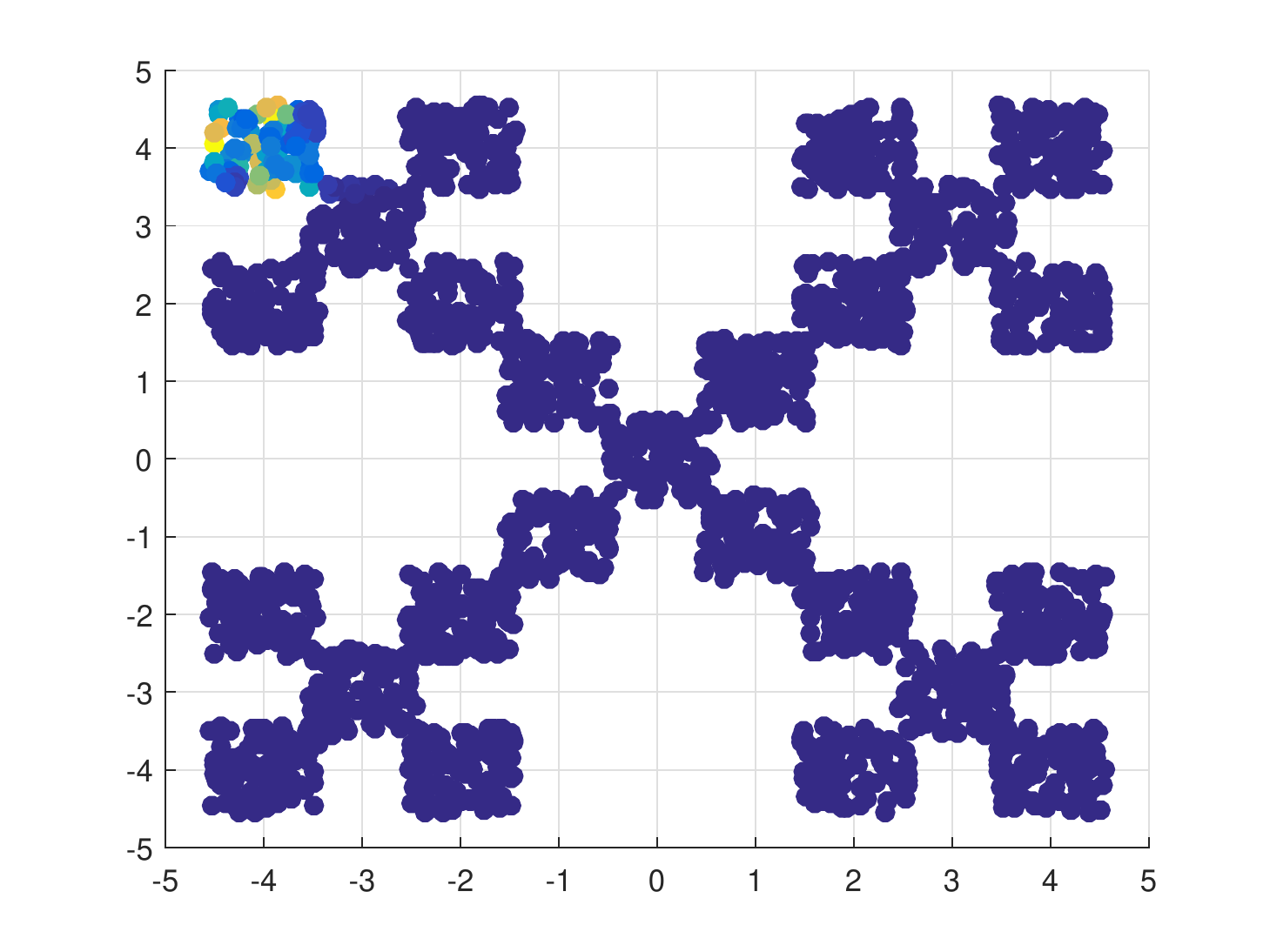}}
	\subfigure[]{
		\includegraphics*[width=.4\columnwidth, viewport =52 30 380 300]{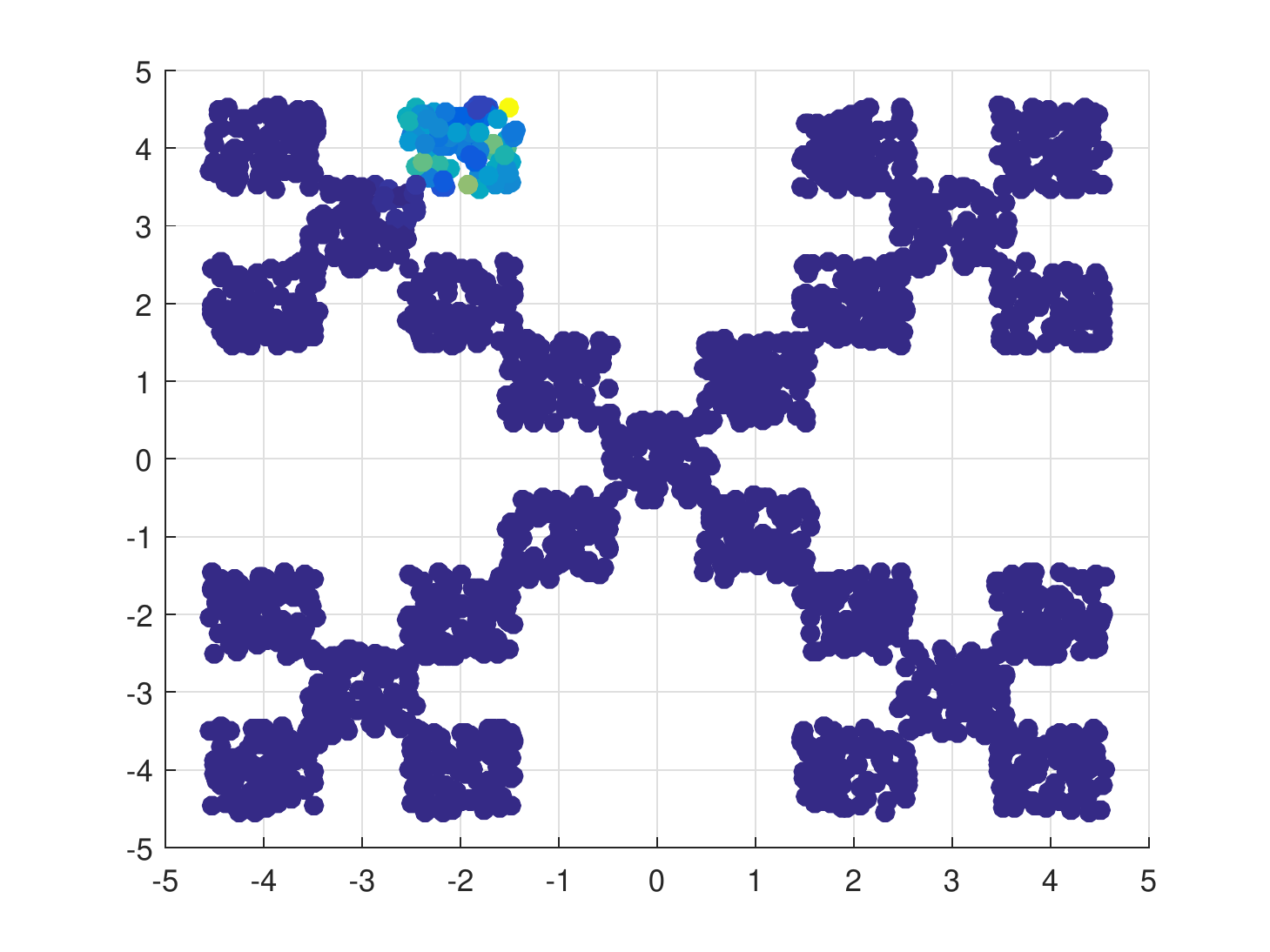}}
	\subfigure[]{
		\includegraphics*[width=.4\columnwidth, viewport =52 30 380 300]{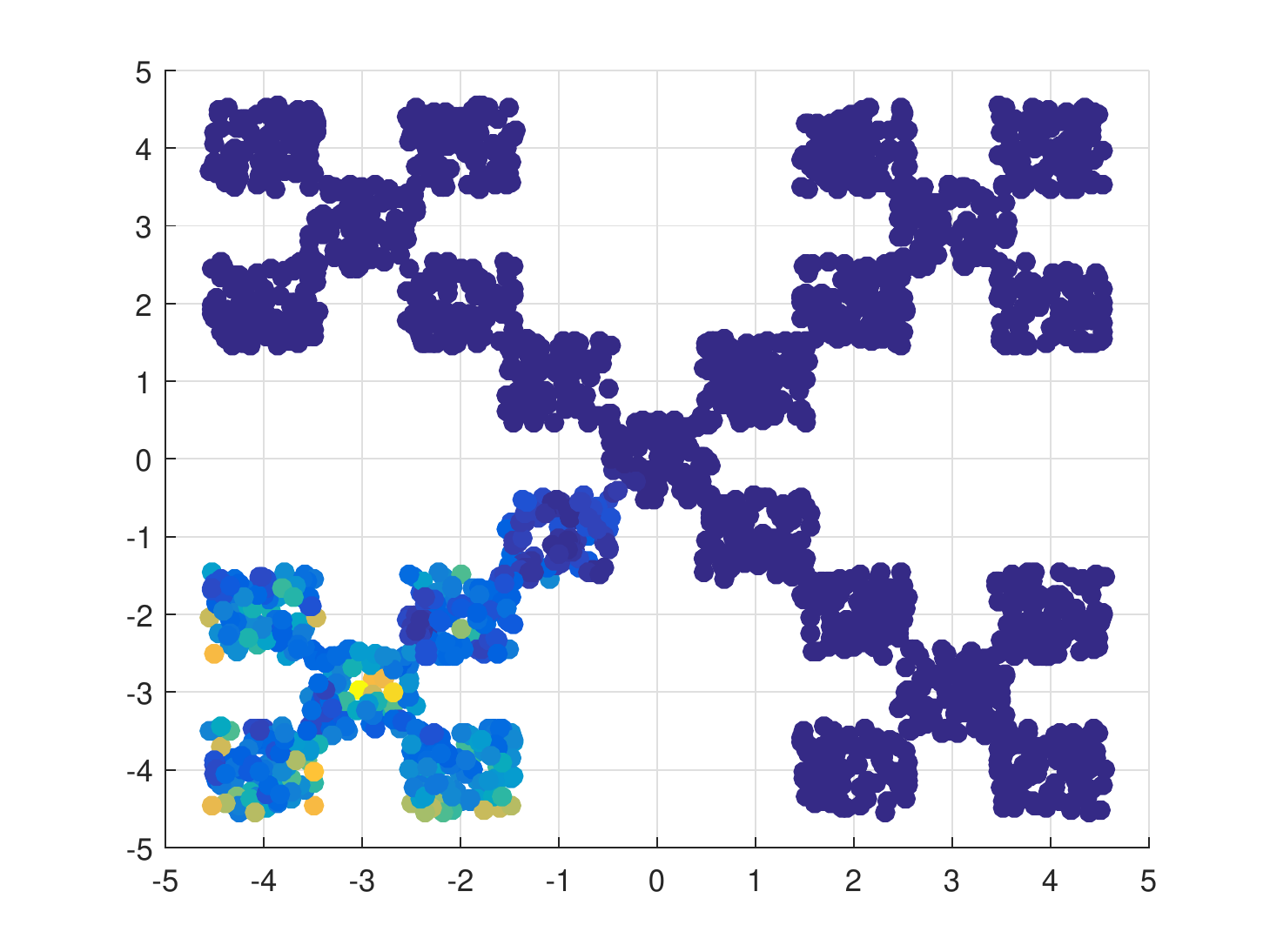}}
	\subfigure[]{
		\includegraphics*[width=.4\columnwidth, viewport =52 30 380 300]{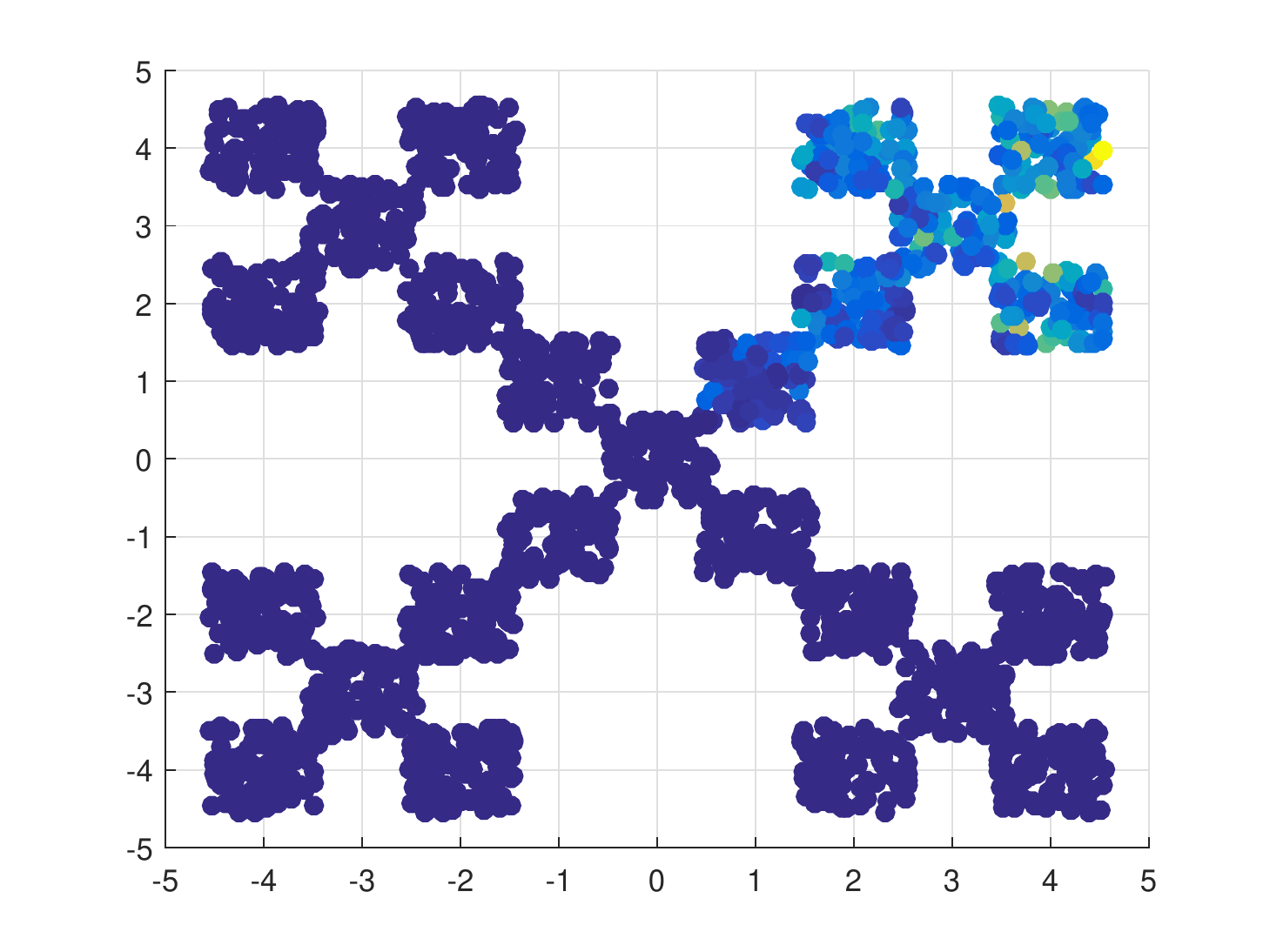}}
	\caption{(a)-(d) Some scaling functions at (a)-(b) level 8 and (c)-(d) level 12 in fractal environment example.}\vspace*{-.15in}
	\label{fig:scaling}
\end{figure}
\begin{figure}[t]
	\centering	
	\subfigure[cost function, $q(\mathbf{x})$]{\label{fig:fractal_cost}
		\includegraphics*[width=.4\columnwidth, viewport =48 30 380 300]{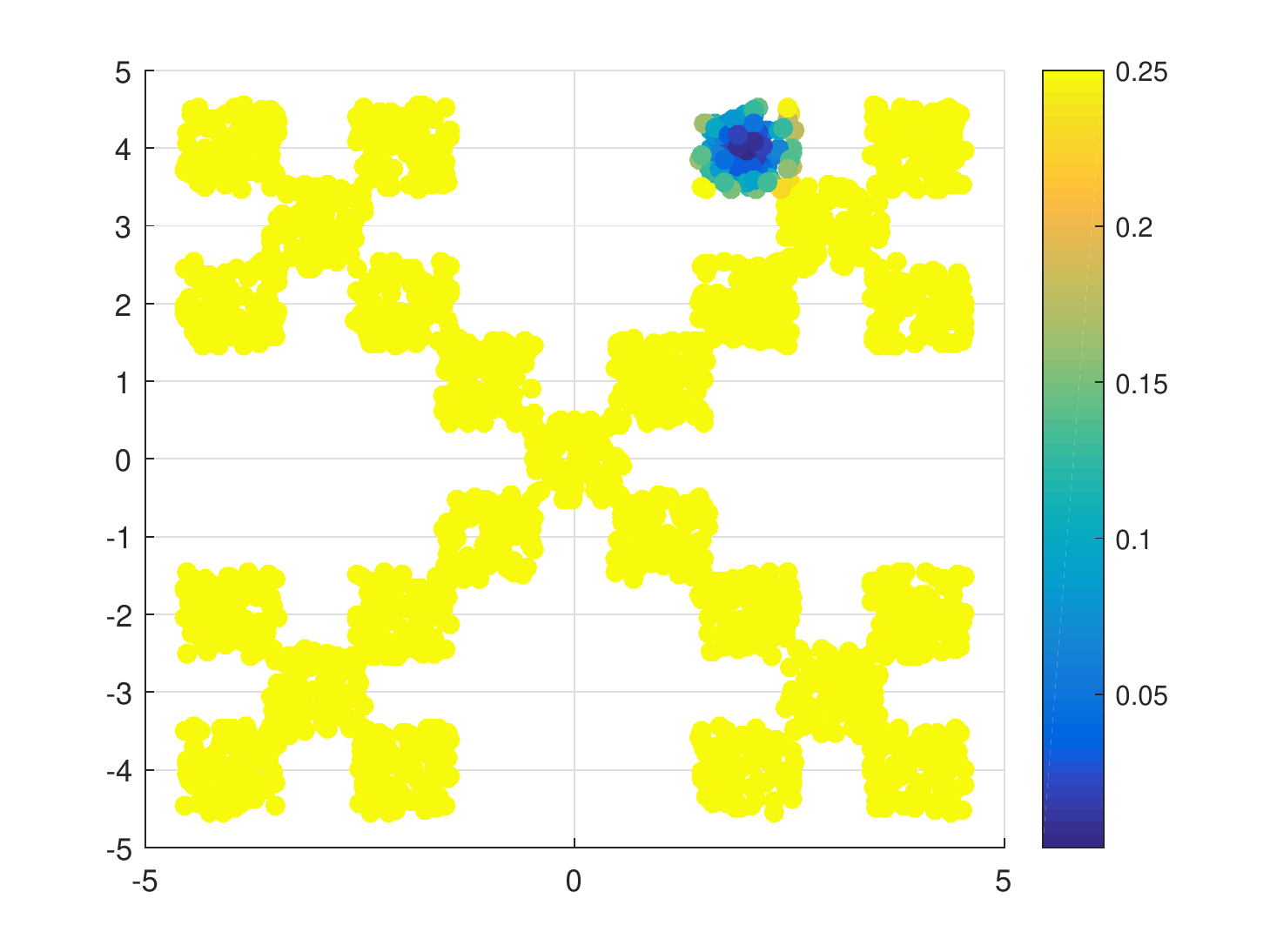}}
	\subfigure[value function, $v(\mathbf{x})$]{
		\includegraphics*[width=.4\columnwidth, viewport =48 30 380 300]{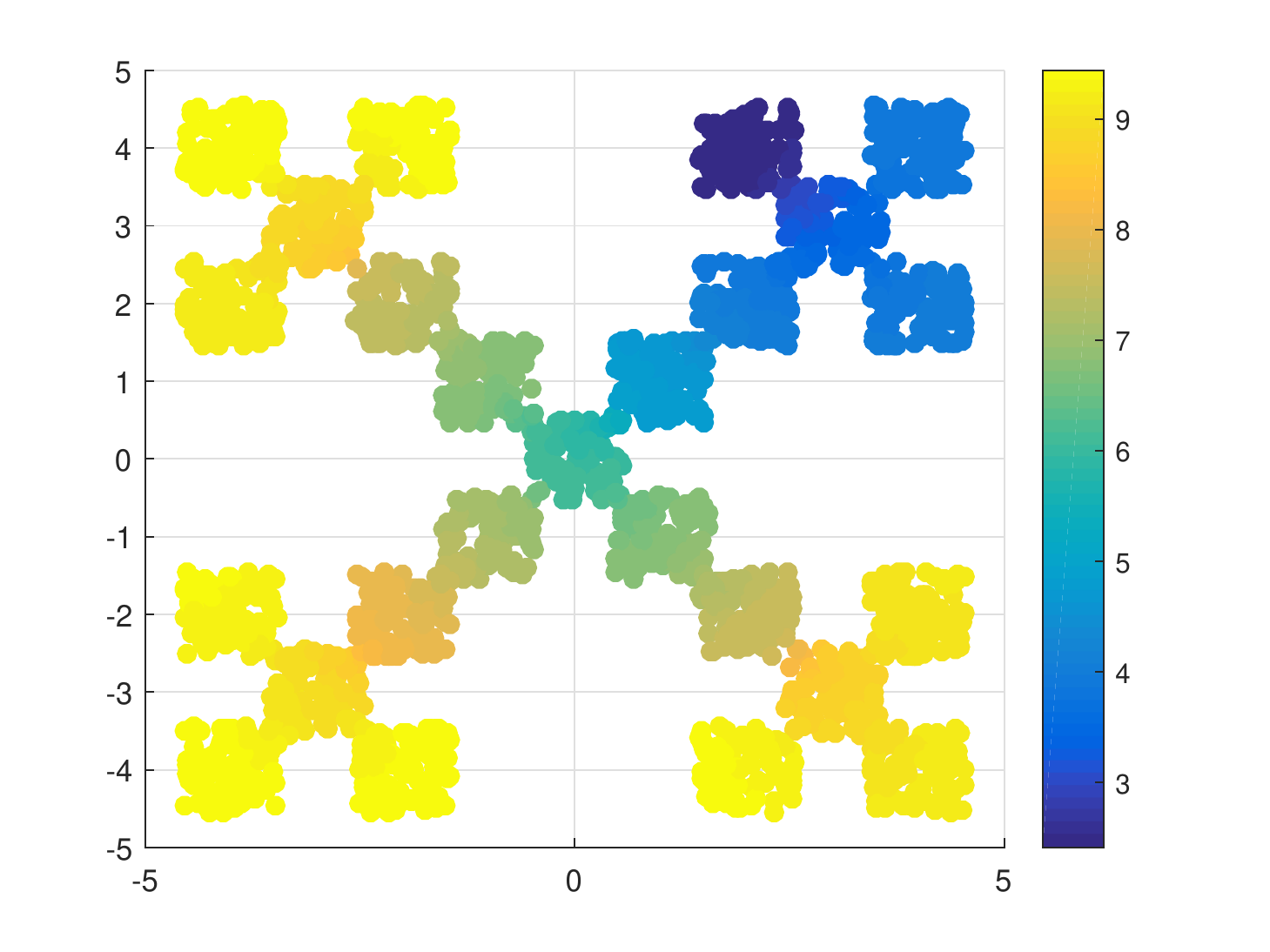}}
	\subfigure[$\hat{\mathbf{v}}_8=-\log(\Phi_8\mathbf{w}_8)$]{
		\includegraphics*[width=.4\columnwidth, viewport =48 30 380 300]{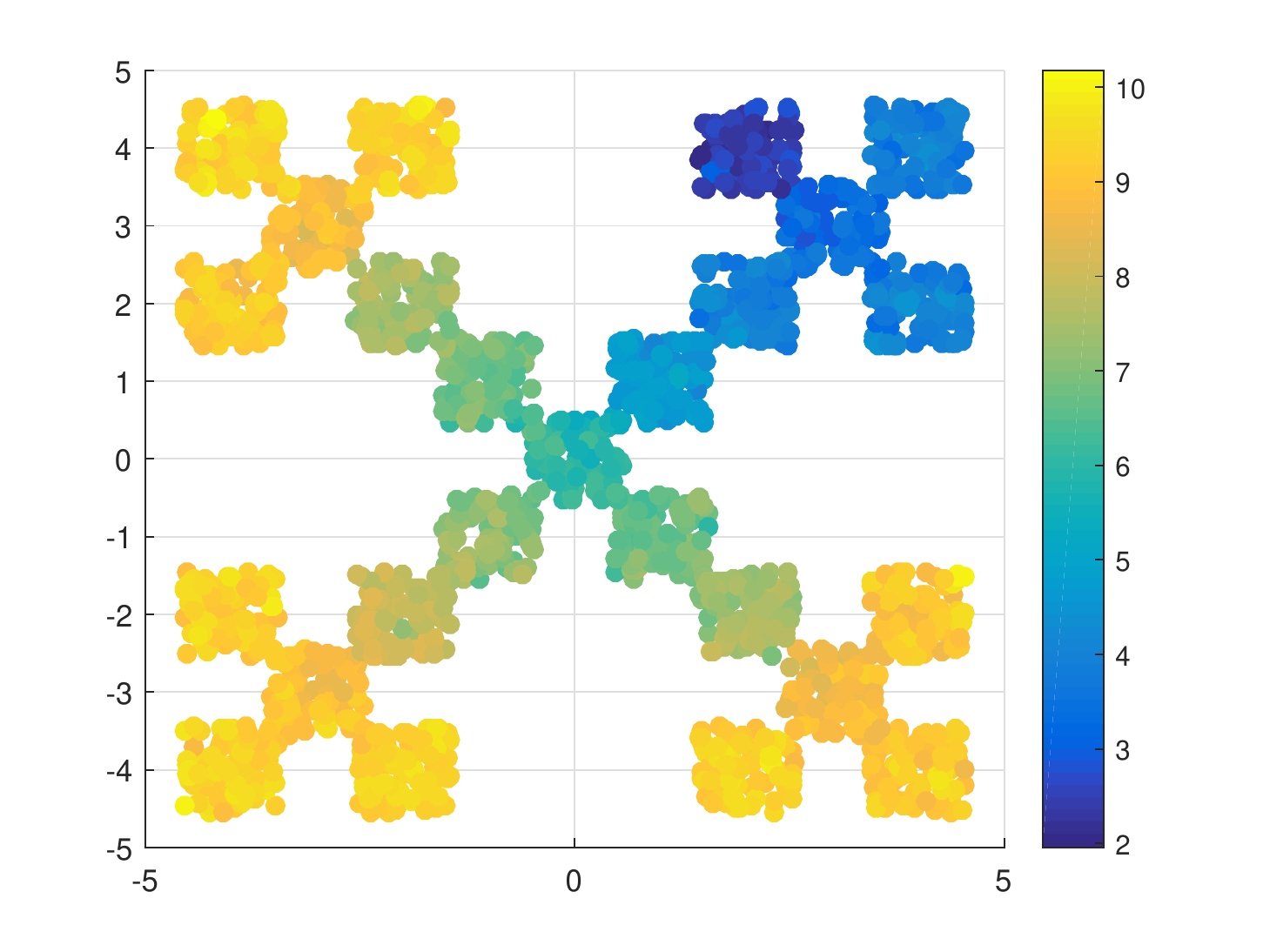}}
	\subfigure[$\hat{\mathbf{v}}_{12}=-\log(\Phi_{12}\mathbf{w}_{12})$]{
		\includegraphics*[width=.4\columnwidth, viewport =48 30 380 300]{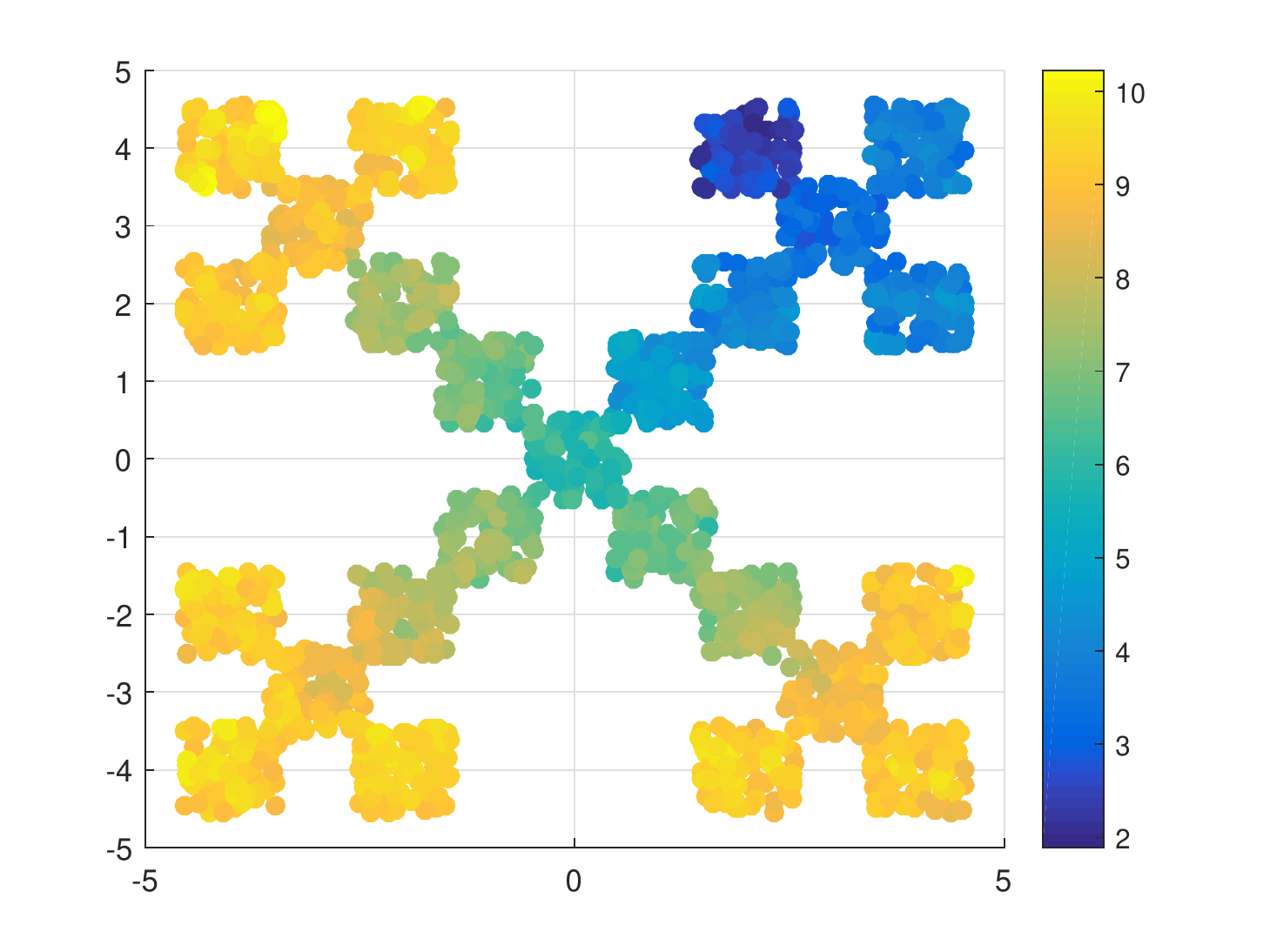}}
	\caption{Fractal environment example. Cost, value and approximated value function at level 8 and 12.}\vspace*{-.15in}
	\label{fig:cost_solution}
\end{figure}
\begin{figure}[t]
	\centering
	\subfigure{
		\includegraphics*[width=.8\columnwidth, viewport =10 0 415 162]{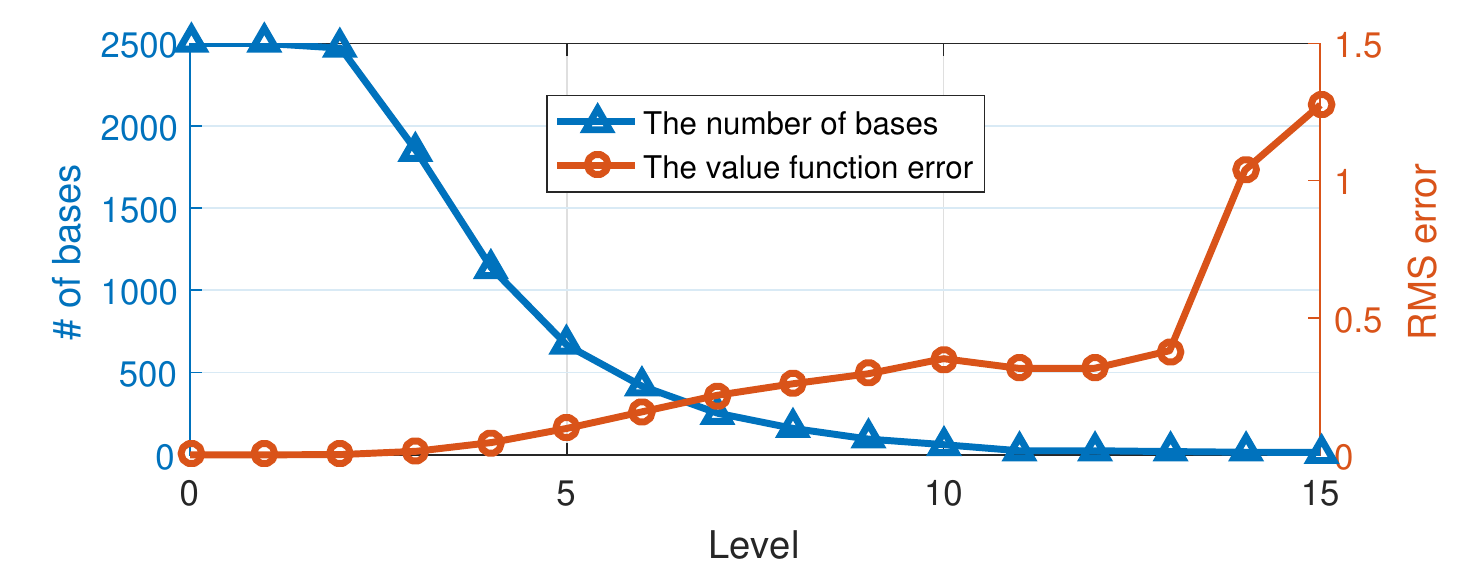}}\vspace*{-.05in}
	\subfigure{
		\includegraphics*[width=.8\columnwidth, viewport =10 0 415 162]{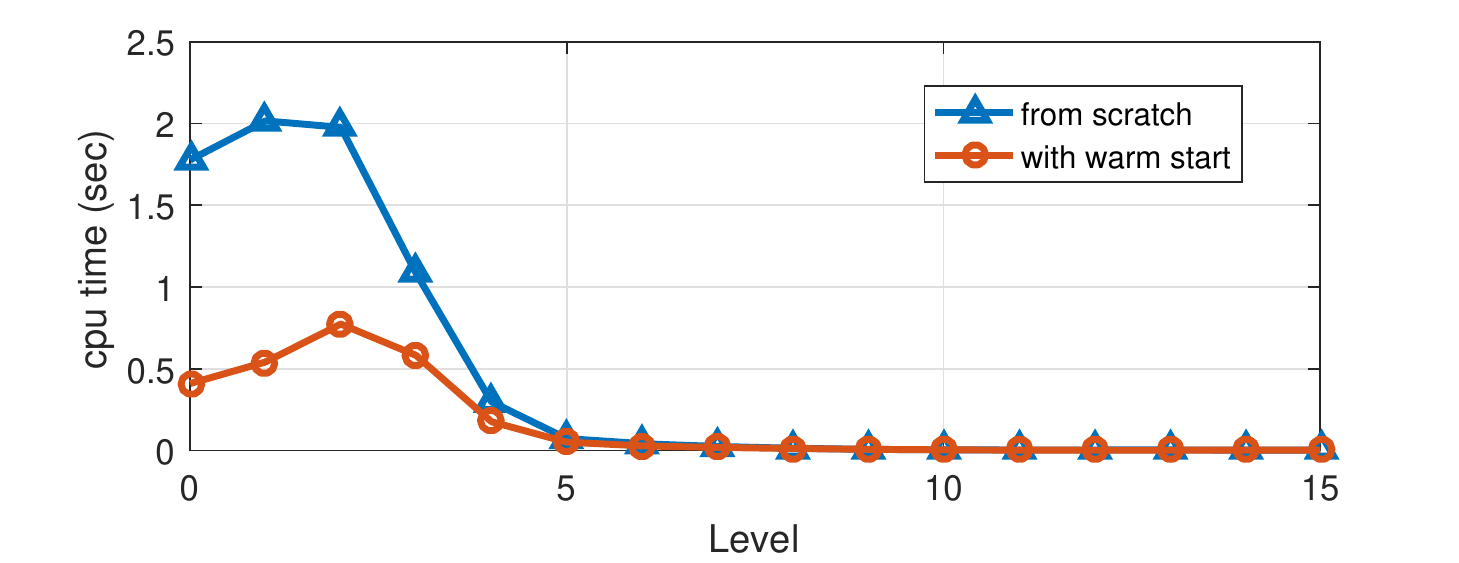}}\vspace*{-.05in}
	\caption{Fractal environment example. Results of Global planning. (upper) RMS errors are measured between $\mathbf{v}$ and $\hat{\mathbf{v}}$. (lower) Eigenvalue problems are solved using a MATLAB built-in function, \textit{eigs}.}\vspace*{-.15in}
	\label{fig:GP}
\end{figure}
\begin{figure}[t]
	\centering
	\subfigure[$err_{\hat{v}_8}$ before LP]{
		\includegraphics*[width=.45\columnwidth, viewport =48 30 380 295]{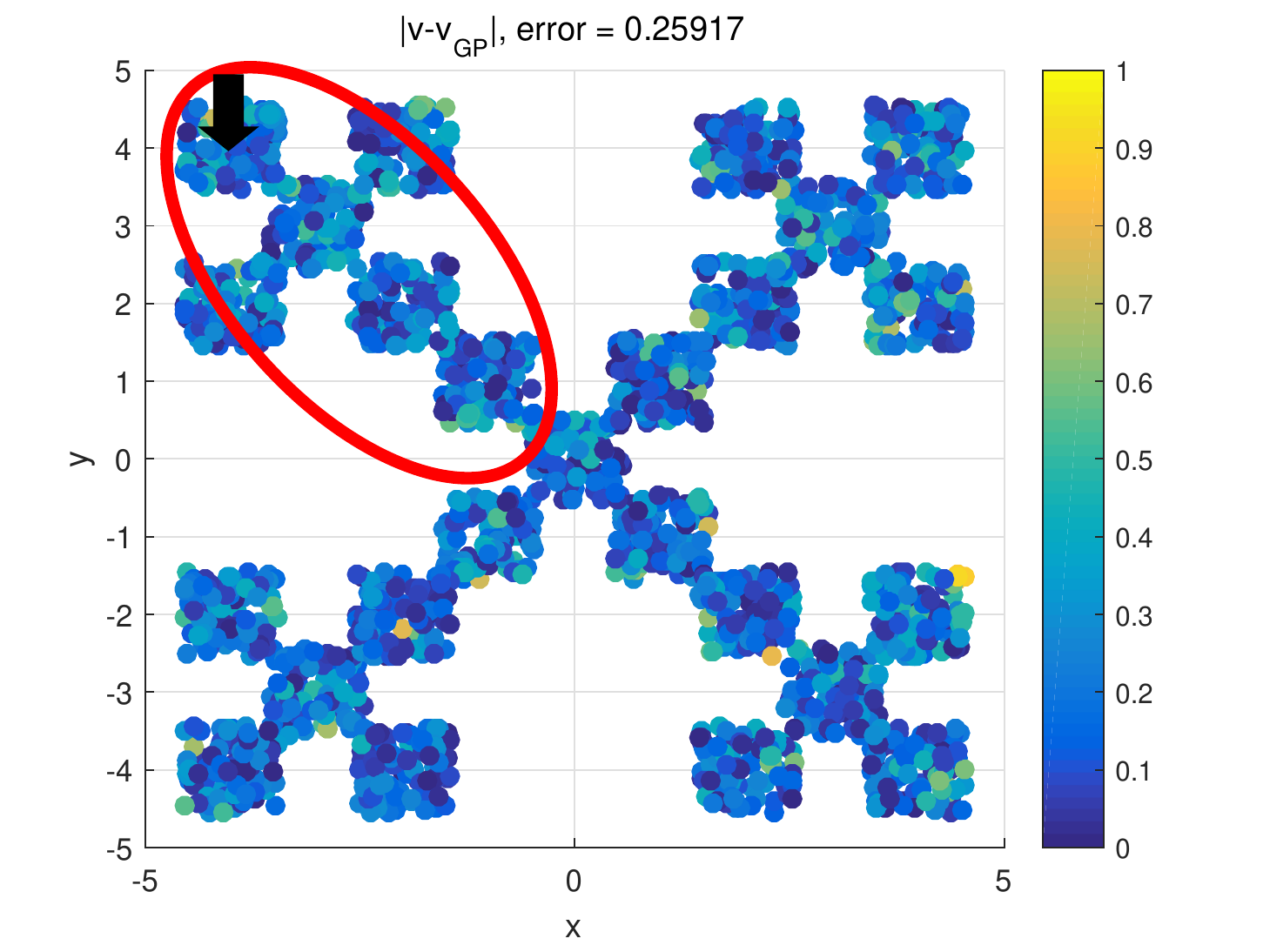}}
	\subfigure[$err_{\hat{v}_8}$ after LP]{
		\includegraphics*[width=.45\columnwidth, viewport =48 30 380 295]{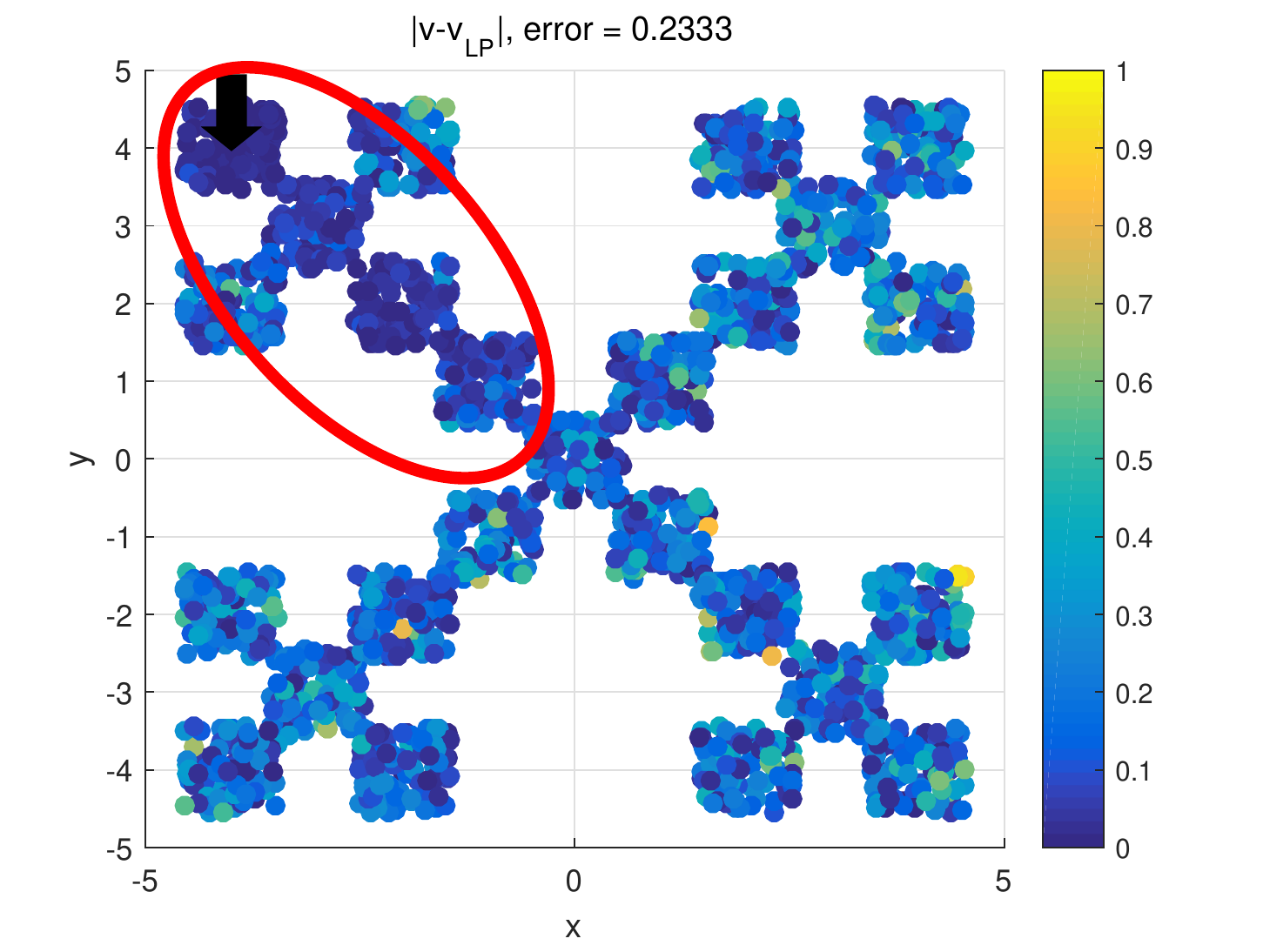}}
	\caption{Fractal environment example. Results of local planning. The errors of value functions are measured as $err_{\hat{v}_8}=|v(\mathbf{x})-\hat{v}_8(\mathbf{x})|$. The robot is assumed to be located in the center of the furthest left and uppermost room.}\vspace*{-.15in}
	\label{fig:LP}
\end{figure}
For the first example, we consider a simple two-dimensional stochastic single integrator in a fractal-like environment.
The environment consists of 5 group of rooms where one group is made up of 5 rooms as shown in Fig. \ref{fig:scaling}.
One can observe that the environment has 2 level self-similarity, which gives the problem a multiscale nature.
The dynamics is given by:
$$\mathbf{f}(\mathbf{x}) = \mathbf{0},~G(\mathbf{x}) = I_2$$
i.e., the position of a robot in configuration space, $\mathbf{x}\in \chi$, is controlled by the velocity input, $\mathbf{u}\in\mathbb{R}^2$.
We set $h=0.01$ and $\sigma = 1$.
In this setting, $\mu(h)=\mathbf{x}_{cur}$ and $\Sigma(h) = hI_2$ are obtained analytically, and the Markov chain is then constructed by a simple Gaussian probability with Euclidean distance, which has been extensively studied in the graph Laplacian-based solution approach for MDPs \cite{mahadevan2007proto,corneil2015attractor,chenmotion2016motion}.
Our stochastic optimal control problem formulation can be viewed as a generalization of them, since it can treat more general dynamics and cost functions.

In order to discretize the state space, 100 samples are obtained from each room, therefore there are 2500 discrete states in total.
Fig. \ref{fig:scaling} shows the abstraction results: there are some scaling functions in the diffusion wavelet tree at level 8 and 12.
It can be seen that at level 8 and 12, the scaling functions roughly represent each small room, and one group of 5 rooms, respectively.
The cost function of the problem is depicted in Fig. \ref{fig:fractal_cost}; it denotes the goal of the robot in this environment.
The optimal value function with the original bases and its approximations on the course bases are shown in Fig. \ref{fig:cost_solution}(b)-(d).
It can be observed from Fig. \ref{fig:GP}(a) that as the scale level increases, the number of bases are significantly reduced, but the value function is quite well-approximated.
We solved the eigenvalue problems \eqref{eq:lin_Bellman3} using a MATLAB built-in function, \textit{eigs}, with/without the warm start.
Fig. \ref{fig:GP}(b) shows that the approximate solution of the coarser level greatly helps the rapid convergence of the eigenvalue problem solver.
Finally, the result of local planning is shown in Fig. \ref{fig:LP}; the robot is assumed to be located in the center of the furthest left and uppermost room.
After the global planning, the approximated value function contains some error (see Fig. \ref{fig:LP}(a)), but the coarse level of the plan can be obtained (see Fig. \ref{fig:cost_solution}(c)-(d)).
Then it can evaluate which regions will be visited more likely, and make a detailed plan for those regions.
Fig. \ref{fig:LP}(b) shows that the value function error in the room located on the way to the goal (highlighted by the red ellipsoid) has been significantly reduced after the local planning.

\subsection{Quadrotor Control}
\begin{figure}[t]
	\centering
	\includegraphics*[width=1\columnwidth, viewport=70 170 900 470]{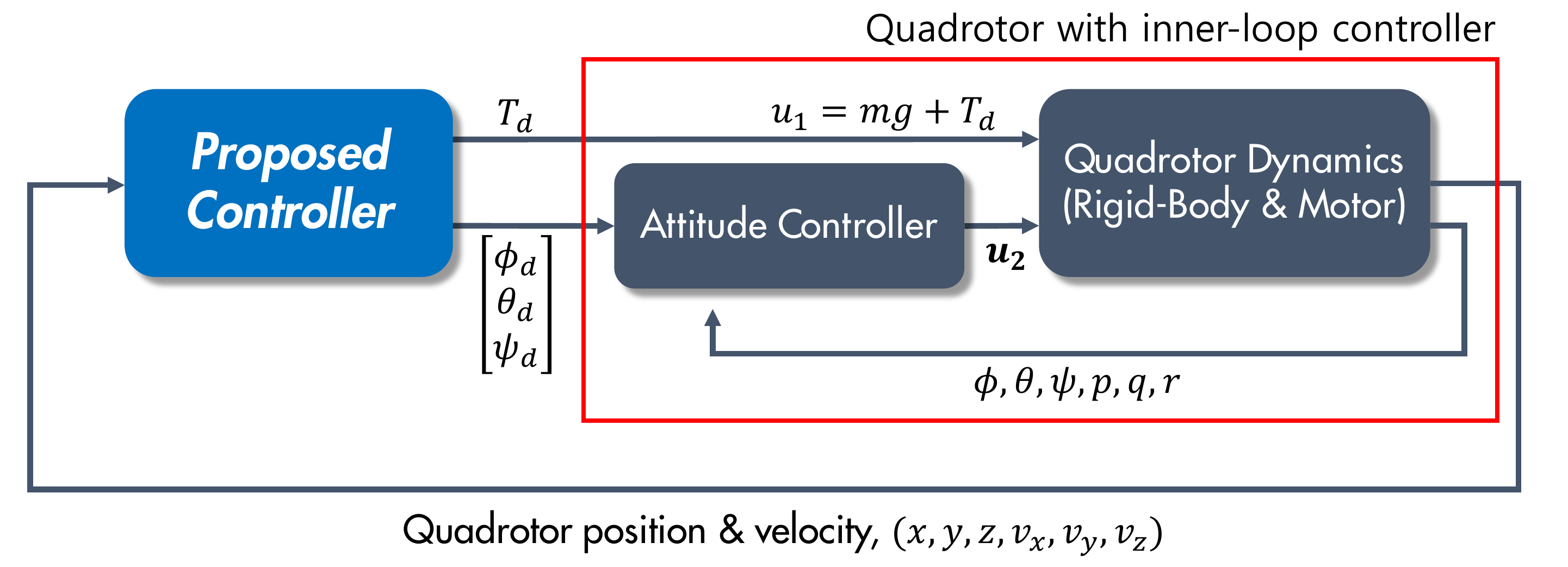}\vspace*{-.15in}
	\caption{Quadrotor control scheme.}
	\label{fig:quad_controller}\vspace*{-.15in}
\end{figure}
The second numerical experiment is about the control of a quadrotor.
We used a quadrotor dynamics introduced in \cite{michael2010grasp}.
The state of the quadrotor is given by the 3-dimensional position $\mathbf{r}=[x,y,z]^T$, velocity $\mathbf{v}=[v_x,v_y,v_z]^T$, orientation $[\phi, \theta, \psi]^T$ (which represent roll, pitch, and yaw angles, respectively) and angular velocity $[p,q,r]^T$.
The input is given by the linear combinations of forces from each rotor, $F_i$, as
\begin{align}
u_1 = \sum_{i=1}^{4}F_i,~\mathbf{u}_2 = L\begin{bmatrix}
0 & 1 & 0 & -1 \\ -1 & 0 & 1 & 0 \\ \mu & -\mu & \mu & -\mu
\end{bmatrix}\begin{bmatrix}
F_1\\F_2\\F_3\\F_4
\end{bmatrix},
\end{align}
where $L$ is the distance of the rotor axis from the center of the body and $\mu$ is a coefficient for the moment-force relation.
Then the 12-dimensional quadrotor dynamics is given by
\begin{align}
&\dot{\mathbf{r}} = \mathbf{v},~\dot{\mathbf{v}} = \begin{bmatrix}0\\0\\-g\end{bmatrix} + \begin{bmatrix}c\psi s\theta+c\theta s\phi s\psi\\s\psi s\theta-c\psi c\theta s\phi\\c\phi c\theta\end{bmatrix}u_1, \nonumber\\
&\begin{bmatrix}\dot{\phi}\\\dot{\theta}\\\dot{\psi}\end{bmatrix} = \begin{bmatrix}c\theta&0&-c\phi s\theta\\0&1&s\phi\\s\theta&0&c\phi c\theta\end{bmatrix}^{-1}\begin{bmatrix}p\\q\\r\end{bmatrix}, \nonumber\\
&\begin{bmatrix}\dot{p}\\\dot{q}\\\dot{r}\end{bmatrix} = -I^{-1}\begin{bmatrix}p\\q\\r\end{bmatrix}\times I\begin{bmatrix}p\\q\\r\end{bmatrix} + I^{-1}\mathbf{u}_2, \label{eq:dyn_quad_full}
\end{align}
where $g$ and $I$ denote the acceleration of gravity and the moment of inertia matrix, respectively; also, $c\cdot$ and $s\cdot$ are the cosine and sine functions.

Assume that the quadrotor embeds the PD-type altitude and attitude controller (shown as the red-box in Fig. \ref{fig:quad_controller}) as:
\begin{equation}
u_1 = mg+T_d,~\mathbf{u}_2 = I\begin{bmatrix}k_{p,\phi}(\phi_d - \phi) - k_{d,\phi}p \\ k_{p,\theta}(\theta_d - \theta) - k_{d,\theta}q \\ k_{p,\psi}(\psi_d - \psi) - k_{d,\psi}r\end{bmatrix},
\end{equation}
with $T_d = k_{p,z}(z_0 - z) - k_{d,z}v_z$, and its position $(x,y)$ is then controlled by the desired orientations~\cite{michael2010grasp}.
In order to obtain the reduced model, we linearized the quadrotor dynamics in the hovering state (with a fixed yaw angle, $\psi=\psi_d=0$) and considered that the linearization effect and the transient happened inside the red-box as noise.
Our control inputs for the reduced model are then the desired pitch $\theta_d$ and roll $\phi_d$ angles which are sent into the red box.
Also, the states are the position and velocity of the quadrotor (see Fig. \ref{fig:quad_controller}).
Then, the reduced dynamics are given as:
\begin{align}
d\mathbf{r}_{1:2} = \mathbf{v}_{1:2}dt,~d\mathbf{v}_{1:2} \approx \begin{bmatrix}g&0\\0&-g\end{bmatrix}\left(\begin{bmatrix}\theta_d\\\phi_d\end{bmatrix}dt+\begin{bmatrix}\sigma_xdw_x\\\sigma_ydw_y\end{bmatrix}\right).\nonumber
\end{align}

Since the dynamics can be decoupled with the subspaces for $(x,v_x)$ and $(y,v_y)$, respectively, we use 200 grid points as samples in each subspace.
The Markov chains are constructed independently and the diffusion wavelet trees of the scaling functions, $\bar{\Phi}_j$ and $\tilde{\Phi}_j$, are obtained for each subspace.
The $j$ level basis functions in the space for $(\mathbf{r}_{1:2}, \mathbf{v}_{1:2})$ are then computed by the \textit{Kroneker tensor products} of the scaling functions in each subspace, e.g., $\Phi_j = \bar{\Phi}_j\otimes\tilde{\Phi}_j$.
While level 0 has 40000 basis functions, which are all combinations of samples from both subspaces, at level 4 there are only 1444 basis functions.
Note that, once the tree is constructed, the same abstraction can be utilized for various tasks.
Fig. \ref{fig:quad} shows the resulting trajectories with the policies at the different levels and for two different cost functions.
To obtain the trajectories, we computed the optimal control sequence using \eqref{eq:opt_con} and passed it to the inner-loop controller of the quadrotor in the way of the receding horizon control.
At the higher level, because the basis functions are coarser, so are the resulting policies; it is observed from Fig. \ref{fig:quad} (b), (d) that the resulting policies at level 4 made detours.
It can be seen that, however, the resulting policies perform well even with far fewer basis functions.
\begin{figure}[t]
	\centering
	\subfigure[Task 1 with $0$ level policy]{
		\includegraphics*[width=.475\columnwidth, viewport =52 20 370 280]{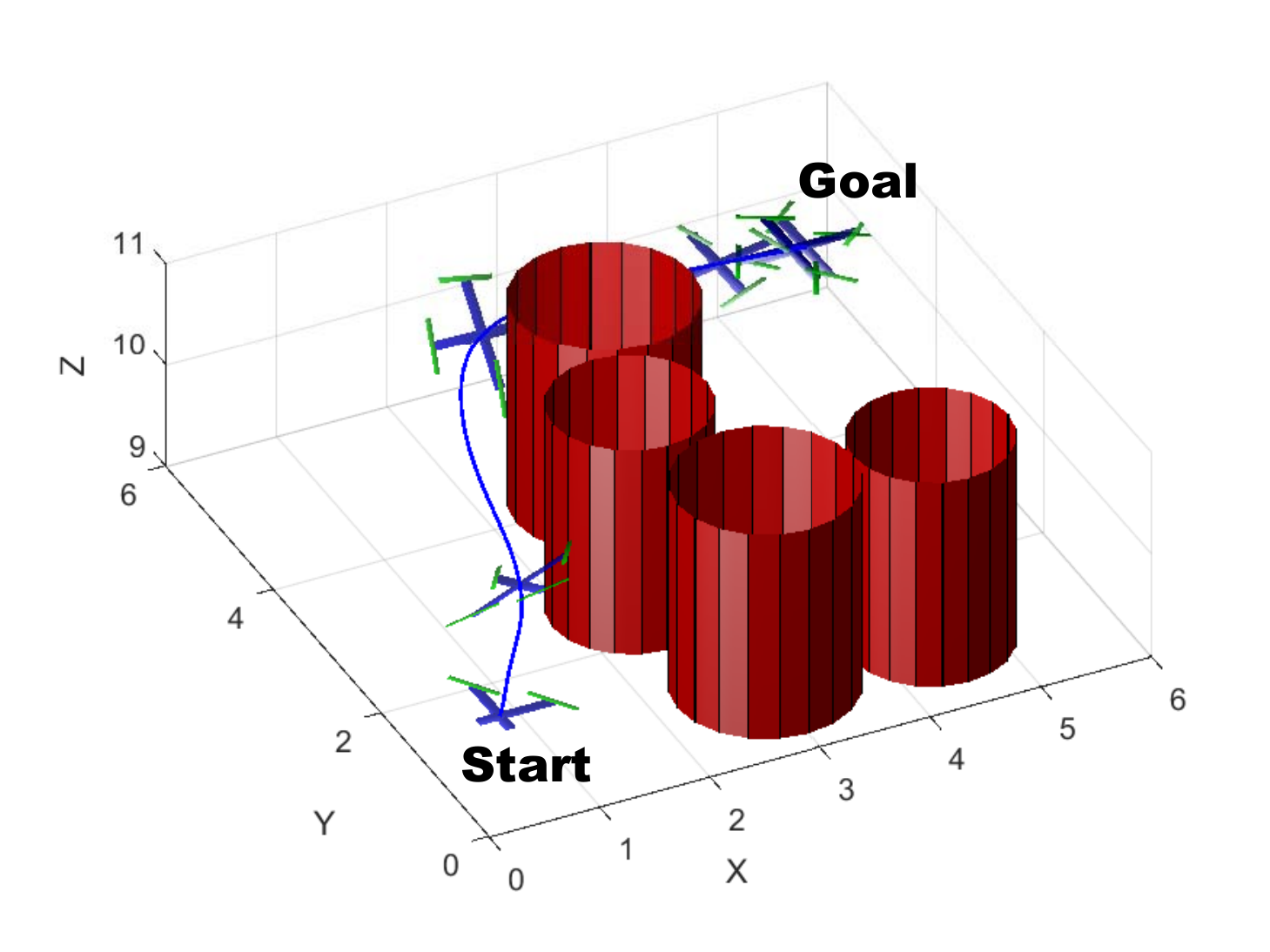}}
	\subfigure[Task 1 with  $4$ level policy]{
		\includegraphics*[width=.475\columnwidth, viewport =52 20 370 280]{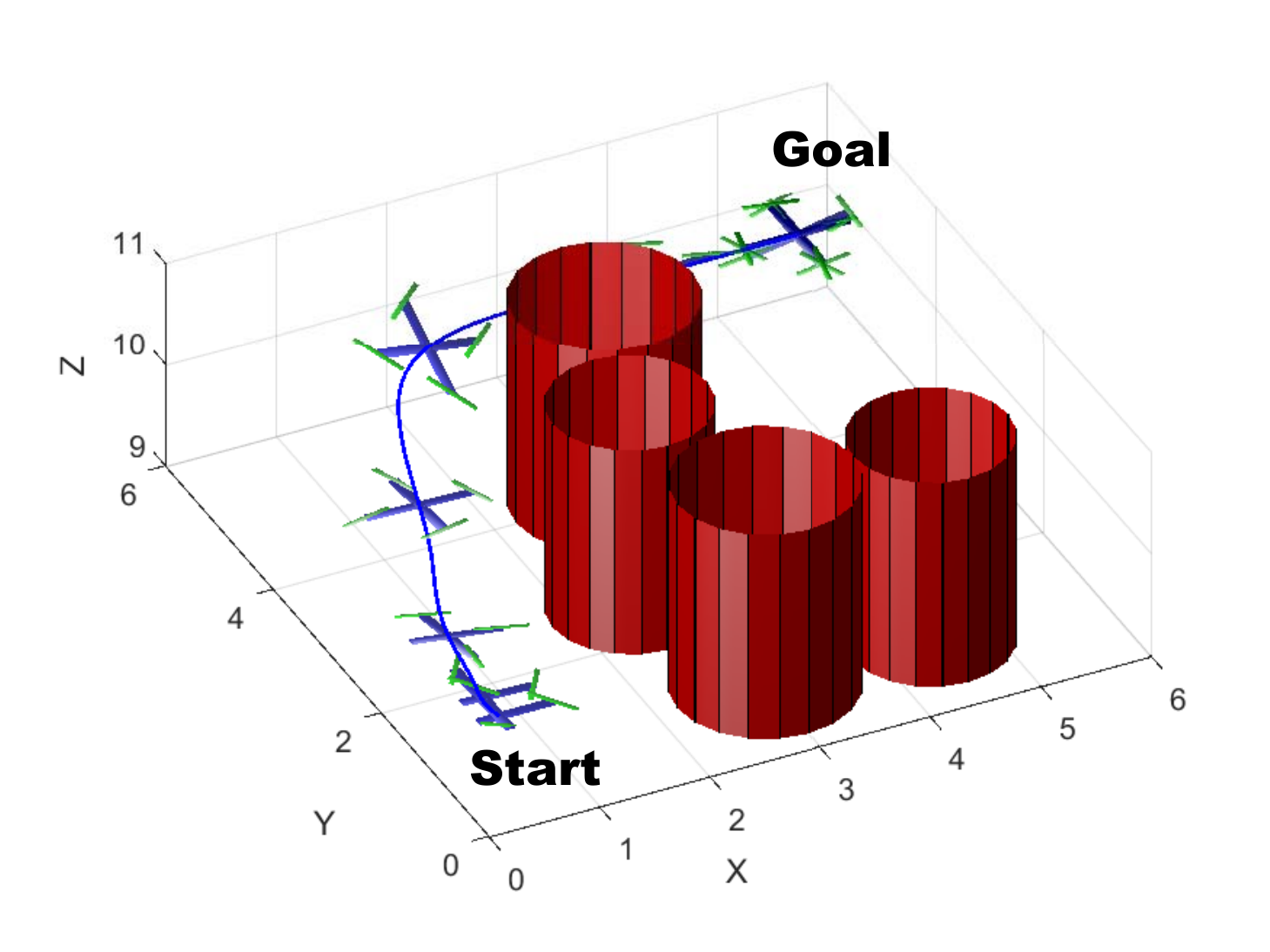}}
	\subfigure[Task 2 with  $0$ level policy]{
		\includegraphics*[width=.475\columnwidth, viewport =65 20 370 280]{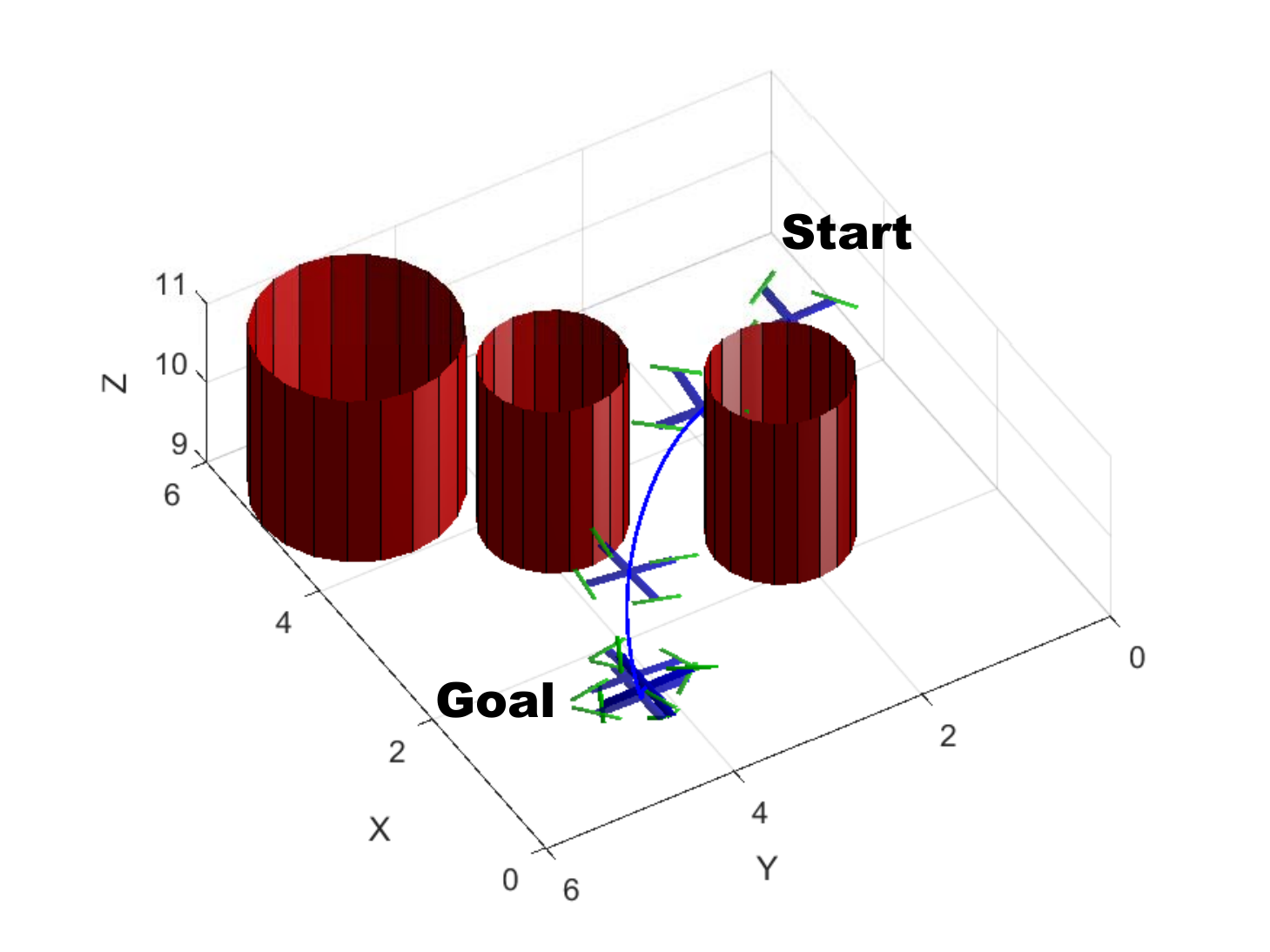}}
	\subfigure[Task 2 with  $4$ level policy]{
		\includegraphics*[width=.475\columnwidth, viewport =65 20 370 280]{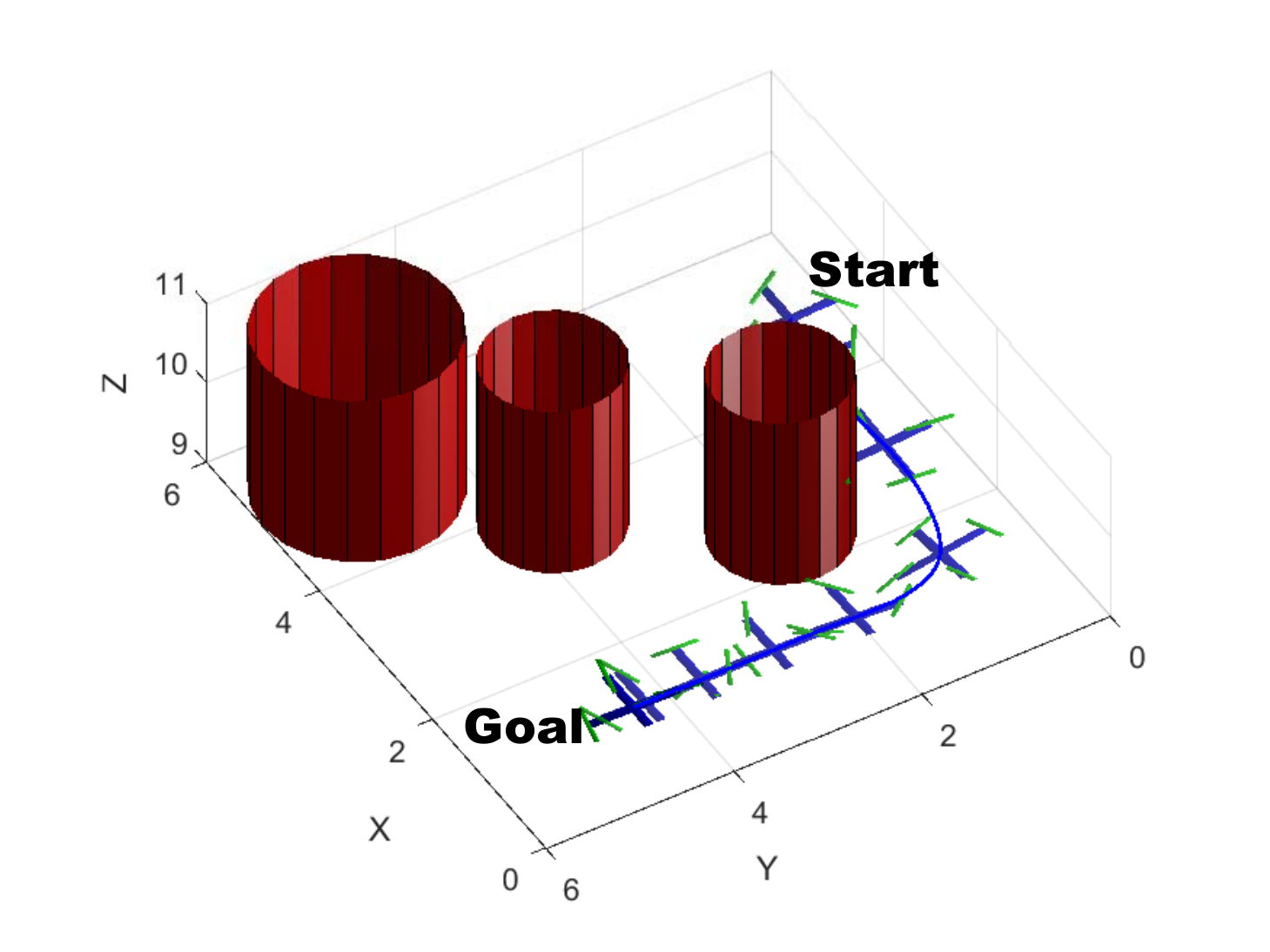}}
	\caption{Quadrotor control example. The cost functions were given so that the quadrotor reaches the goal region without colliding with obstacles.}\vspace*{-.3in}
	\label{fig:quad}
\end{figure}

\section{Conclusions}
This work has proposed a multiscale framework to solve a class of continuous-time, continuous-space stochastic optimal control problems in a complex environment.
Using bases obtained by the diffusion wavelet method, the problem has been solved efficiently: the global plan was computed with coarse resolution and a detailed plan was only obtained for important regions.
In addition, when combined with a receding-horizon scheme in the execution of the optimal control solution, the proposed method can generate a continuous control sequence for robot motion.
Numerical examples demonstrated that the optimal solution can be expressed in very low-dimensional parametric space since the multi-scale abstraction provides meaningful basis functions.

It is worth noting that, because a Markov chain only needs to contain information about dynamics and the domain geometry, the same abstraction can be re-utilized for various tasks.
We expect that this will allow us to treat more challenging planning and control problems where extensive robotic research is being investigated. 

\addtolength{\textheight}{-12cm}   



\small
\section*{Acknowledgment}
This work was supported by Agency for Defense Development (under in part  contract  \#UD140053JD and in part contract \#UD150047JD).
\vspace*{-.2in}


\bibliographystyle{IEEEtran}
\bibliography{icra17}

\end{document}